\theoremstyle{plain}
\newenvironment{customAS}[1]
  {\innercustomAS}
  {\endinnercustomAS}
\newtheorem{theorem}{Theorem}[section]
\newtheorem{lemma}[theorem]{Lemma}
\theoremstyle{definition}
\newtheorem{definition}[theorem]{Definition}
\theoremstyle{remark}
\newtheorem{remark}[theorem]{Remark}
\newcommand{\tabtopvspace}{{\vspace{-0pt}}}
\def\eqref#1{equation~\ref{#1}}
\def\1{\bm{1}}
\def\va{{\bm{a}}}
\def\vb{{\bm{b}}}
\def\vc{{\bm{c}}}
\def\ve{{\bm{e}}}
\def\vk{{\bm{k}}}
\def\vq{{\bm{q}}}
\def\vu{{\bm{u}}}
\def\vv{{\bm{v}}}
\def\vw{{\bm{w}}}
\def\vx{{\bm{x}}}
\def\vy{{\bm{y}}}
\def\vz{{\bm{z}}}
\def\mA{{\bm{A}}}
\def\mB{{\bm{B}}}
\def\mF{{\bm{F}}}
\def\mH{{\bm{H}}}
\def\mL{{\bm{L}}}
\def\mV{{\bm{V}}}
\def\mW{{\bm{W}}}
\def\mX{{\bm{X}}}
\DeclareMathAlphabet{\mathsfit}{\encodingdefault}{\sfdefault}{m}{sl}
\SetMathAlphabet{\mathsfit}{bold}{\encodingdefault}{\sfdefault}{bx}{n}
\definecolor{dark2green}{rgb}{0.1, 0.65, 0.3}
\definecolor{dark2orange}{rgb}{0.9, 0.4, 0.}
\definecolor{dark2purple}{rgb}{0.4, 0.4, 0.8}
\icmltitlerunning{On the Connection Between MPNN and Graph Transformer}
\begin{document}

\twocolumn[
\icmltitle{On the Connection Between MPNN and Graph Transformer}



\icmlsetsymbol{equal}{*}
\begin{icmlauthorlist}
\icmlauthor{Chen Cai}{ucsd}
\icmlauthor{Truong Son Hy}{ucsd}
\icmlauthor{Rose Yu}{ucsd}
\icmlauthor{Yusu Wang}{ucsd}
\end{icmlauthorlist}
\icmlaffiliation{ucsd}{University of California San Diego, San Diego, USA}

\icmlcorrespondingauthor{Chen Cai}{c1cai@ucsd.edu}

\icmlkeywords{graph neural networks}

\vskip 0.3in
]



\printAffiliationsAndNotice{} 

\begin{abstract}
Graph Transformer (GT) recently has emerged as a new paradigm of graph learning algorithms, outperforming the previously popular Message Passing Neural Network (MPNN) on multiple benchmarks. Previous work \citep{kim2022pure} shows that with proper position embedding, GT can approximate MPNN arbitrarily well, implying that GT is at least as powerful as MPNN. In this paper, we study the inverse connection and show that MPNN with virtual node (VN), a commonly used heuristic with little theoretical understanding, is powerful enough to arbitrarily approximate the self-attention layer of GT.  
In particular, we first show that if we consider one type of linear transformer, the so-called Performer/Linear Transformer \citep{choromanski2020rethinking,katharopoulos2020transformers}, then MPNN + VN with only $\mathcal{O}(1)$ depth and $\mathcal{O}(1)$ width can approximate a self-attention layer in Performer/Linear Transformer. 
Next, via a connection between MPNN + VN and DeepSets, we prove the MPNN + VN with $\mathcal{O}(n^d)$ width and $\mathcal{O}(1)$ depth can approximate the self-attention layer arbitrarily well, where $d$ is the input feature dimension. Lastly, under some assumptions, we provide an explicit construction of MPNN + VN with $\mathcal{O}(1)$ width and $\mathcal{O}(n)$ depth approximating the self-attention layer in GT arbitrarily well.
On the empirical side, we demonstrate that 1) MPNN + VN is a surprisingly strong baseline, outperforming GT on the recently proposed Long Range Graph Benchmark (LRGB) dataset, 2) our MPNN + VN improves over early implementation on a wide range of OGB datasets and 3) MPNN + VN outperforms Linear Transformer and MPNN on the climate modeling task. 

\end{abstract}

\section{Introduction}
MPNN (Message Passing Neural Network) \cite{gilmer2017neural} has been the leading architecture for processing graph-structured data. Recently, transformers in natural language processing \citep{vaswani2017attention,kalyan2021ammus} and vision \citep{d2021convit,han2022survey} have extended their success to the domain of graphs. There have been several pieces of work \citep{,ying2021transformers,wu2021representing,kreuzer2021rethinking,rampavsek2022recipe, kim2022pure} showing that with careful position embedding \citep{lim2022sign}, graph transformers (GT) can achieve compelling empirical performances on large-scale datasets and start to challenge the dominance of MPNN. 
\begin{figure}[t!]
  \centering
  \includegraphics[width=1\linewidth]{./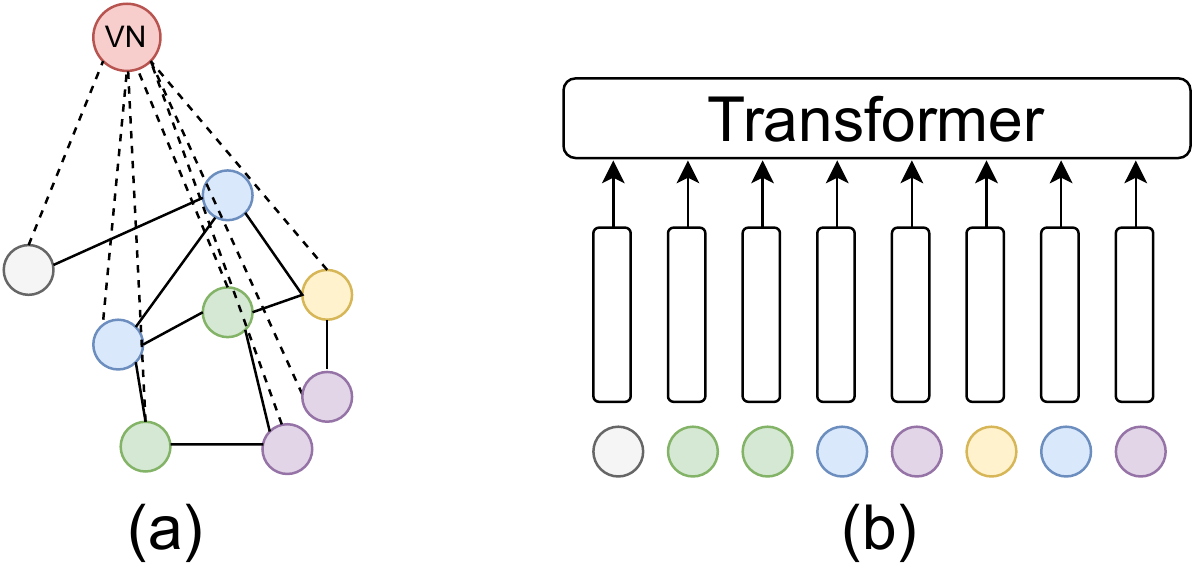}
\caption{MPNN + VN and Graph Transformers.}
\label{fig:mpnn+gt}
\end{figure}

\begin{table*}[th!]
\centering
\caption{Summary of approximation result of MPNN + VN on self-attention layer. $n$ is the number of nodes and $d$ is the feature dimension of node features. The dependency on $d$ is hidden. }
\tabtopvspace
\label{tab:theoretical-result}
\scalebox{1}{
\begin{tabular}{@{}lllll@{}}
\toprule
 & Depth & Width & Self-Attention & Note \\ \midrule
 \Cref{thm:constant-depth-constant-width} & $\mc{O}(1)$ & $\mc{O}(1)$ & Approximate & Approximate self attention in Performer \cite{choromanski2020rethinking} \\ 
 \Cref{thm:constant-depth} & $\mc{O}(1)$ & $\mc{O}(n^d)$ & Full & Leverage the universality of equivariant \DS{} \\
\Cref{thm:constant-width}  & $\mc{O}(n)$ & $\mc{O}(1)$ & Full & Explicit construction, strong assumption on $\mc{X}$ \\
\Cref{prop:gat-v2-selection} & $\mc{O}(n)$ & $\mc{O}(1)$ & Full & Explicit construction, more relaxed (but still strong) assumption on $\mc{X}$ \\ \bottomrule
\end{tabular}
}
\end{table*}

MPNN imposes a sparsity pattern on the computation graph and therefore enjoys linear complexity. It however suffers from well-known over-smoothing \citep{li2018deeper,oono2019graph,cai2020note} and over-squashing \citep{alon2020bottleneck,topping2021understanding} issues, limiting its usage on long-range modeling tasks where the label of one node depends on features of nodes far away. GT relies purely on position embedding to encode the graph structure and uses vanilla transformers on top. \footnote{GT in this paper refers to the practice of tokenizing graph nodes and applying standard transformers on top \citep{ying2021transformers,kim2022pure}. There exists a more sophisticated GT \citep{kreuzer2021rethinking} that further conditions attention on edge types but it is not considered in this paper. } 
It models all pairwise interactions directly in one layer, making it computationally more expensive. Compared to MPNN, GT shows promising results on tasks where modeling long-range interaction is the key, but the quadratic complexity of self-attention in GT limits its usage to graphs of medium size. Scaling up GT to large graphs remains an active research area \citep{wunodeformer}. 

Theoretically, it has been shown that graph transformers can be powerful graph learners \citep{kim2022pure}, i.e., graph transformers with appropriate choice of token embeddings have the capacity of approximating linear permutation equivariant basis, and therefore can approximate 2-IGN (Invariant Graph Network), a powerful architecture that is at least as expressive as MPNN \citep{maron2018invariant}. This raises an important question that \textit{whether GT is strictly more powerful than MPNN}. Can we approximate GT with MPNN?

One common intuition of the advantage of GT over MPNN is its ability to model long-range interaction more effectively. However, from the MPNN side, one can resort to a simple trick to escape locality constraints for effective long-range modeling: the use of an additional \emph{virtual node (VN)} that connects to all input graph nodes. On a high level, MPNN + VN augments the existing graph with one virtual node, which acts like global memory for every node exchanging messages with other nodes. Empirically this simple trick has been observed to improve the MPNN and has been widely adopted \citep{gilmer2017neural,hu2020open,hu2021ogb} since the early beginning of MPNN \citep{gilmer2017neural, battaglia2018relational}. However, there is very little theoretical study of MPNN + VN \citep{hwanganalysis}. 

In this work, we study the theoretical property of MPNN + VN, and its connection to GT. We systematically study the representation power of MPNN + VN, both for certain approximate self-attention and for the full self-attention layer, and provide a depth-width trade-off, summarized in \Cref{tab:theoretical-result}. In particular, 
\begin{itemize}
\item With $\mc{O}(1)$ depth and $\mc{O}(1)$ width, MPNN + VN can approximate one self-attention layer of Performer \citep{choromanski2020rethinking} and Linear Transformer \citep{katharopoulos2020transformers}, a type of linear transformers \citep{tay2020efficient}. 
\item 
Via a link between MPNN + VN with \DS{} \citep{zaheer2017deep}, we prove MPNN + VN with $\mc{O}(1)$ depth and $\mc{O}(n^d)$ width ($d$ is the input feature dimension) is permutation equivariant universal, implying it can approximate self-attention layer and even full-transformers.  

\item 
Under certain assumptions on node features, we prove an explicit construction of $\mc{O}(n)$ depth $\mc{O}(1)$ width MPNN + VN approximating 1 self-attention layer arbitrarily well on graphs of size $n$. 
Unfortunately, the assumptions on node features are rather strong, and whether we can alleviate them will be an interesting future direction to explore. 

\item Empirically, we show 1) that MPNN + VN works surprisingly well on the recently proposed LRGB (long-range graph benchmarks) datasets \citep{dwivedi2022long}, which arguably require long-range interaction reasoning to achieve strong performance 2) our implementation of MPNN + VN is able to further improve the early implementation of MPNN + VN on OGB datasets and 3) MPNN + VN outperforms Linear Transformer \citep{katharopoulos2020transformers} and MPNN on the climate modeling task. 
\end{itemize}

\section{Related Work}
\textbf{Virtual node in MPNN.}
The virtual node augments the graph with an additional node to facilitate the information exchange among all pairs of nodes. It is a heuristic proposed in \cite{gilmer2017neural} and has been observed to improve the performance in different tasks \citep{hu2021ogb,hu2020open}. Surprisingly, its theoretical properties have received little study. To the best of our knowledge, only a recent paper \citep{hwanganalysis} analyzed the role of the virtual node in the link prediction setting in terms of 1) expressiveness
of the learned link representation and 2) the potential impact on under-reaching and over-smoothing. 

\textbf{Graph transformer.}
Because of the great successes of Transformers in natural language processing (NLP) \citep{vaswani2017attention,wolf2020transformers} and recently in computer vision \citep{dosovitskiy2020image,d2021convit,liu2021swin}, there is great interest in extending transformers for graphs \citep{muller2023attending}. One common belief of advantage of graph transformer over MPNN is its capacity in capturing long-range interactions while alleviating over-smoothing \citep{li2018deeper,oono2019graph,cai2020note} and over-squashing in MPNN \citep{alon2020bottleneck,topping2021understanding}. 

Fully-connected Graph transformer \citep{dwivedi2020generalization} was introduced with eigenvectors of graph Laplacian as the node positional encoding (PE). Various follow-up works proposed different ways of PE to improve GT, ranging from an invariant aggregation of Laplacian's eigenvectors in SAN \citep{kreuzer2021rethinking}, pair-wise graph distances in Graphormer \citep{ying2021transformers}, relative PE derived from diffusion kernels in GraphiT \citep{mialon2021graphit}, and recently Sign and Basis Net \citep{lim2022sign} with a principled way of handling sign and basis invariance. 
Other lines of research in GT include combining MPNN and GT  \citep{wu2021representing,rampavsek2022recipe}, encoding the substructures \citep{chen2022structure}, GT for directed graphs \citep{geisler2023transformers}, and efficient graph transformers for large graphs \citep{wunodeformer}.

\textbf{Deep Learning on Sets.} Janossy pooling \citep{murphy2018janossy} is a framework to build permutation invariant architecture for sets using permuting \& averaging paradigm while limiting the number of elements in permutations to be $k < n$.  Under this framework, \DS{} \citep{zaheer2017deep} and PointNet \citep{qi2017pointnet} are recovered as the case of $k=1$. For case $k=2$, self-attention and Relation Network \citep{santoro2017simple} are recovered \citep{wagstaff2022universal}. Although \DS{} and Relation Network \citep{santoro2017simple} are both shown to be universal permutation invariant, recent work \citep{zweig2022exponential} provides a finer characterization on the representation gap between the two architectures.  


\section{Preliminaries}
We denote $\mX\in \R{n \times d}$ the concatenation of graph node features and positional encodings, where node $i$ has feature $\vx_i \in \R{d}$. When necessary, we use $\vx^{(l)}_j$ to denote the node $j$'s feature at depth $l$.  Let $\mc{M}$ be the space of multisets of vectors in $\R{d}$. We use $\mc{X} \subseteq \R{n\times d} $ to denote the space of node features and the $\mc{X}_i$ be the projection of $\mc{X}$ on $i$-th coordinate. $\norm{\cdot}$ denotes the 2-norm. $[\vx, \vy, \vz]$ denotes the concatenation of $\vx, \vy, \vz$. $[n]$ stands for the set $\{1, 2, ..., n\}$.

\begin{definition}[attention]
We denote key and query matrix as $\mW_K, \mW_Q\in \R{d \times d'}$, and value matrix as $\mW_V \in \R{d \times d}$ \footnote{For simplicity, we assume the output dimension of self-attention is the same as the input dimension. All theoretical results can be extended to the case where the output dimension is different from $d$.}. Attention score between two vectors $\vu, \vv \in \R{d \times 1}$ is defined as $\alpha(\vu, \vv) = \text{softmax}(\vu^T \mW_Q(\mW_K)^T\vv)$. We denote $\mc{A}$ as the space of attention $\alpha$ for different $\mW_Q, \mW_K, \mW_V$. We also define unnormalized attention score $\alpha'(\cdot, \cdot)$ to be $\alpha'(\vu, \vv) = \vu^T \mW_Q(\mW_K)^T\vv$.
Self attention layer is a matrix function $\mL: \R{n\times d} \rightarrow \R{n\times d}$  of the following form: $\mL(\mX) = \text{softmax}(\mX\mW_Q(\mX\mW_K)^T)\mX \mW_V$. 
\end{definition}

\subsection{MPNN Layer}
\begin{definition}[MPNN layer \citep{gilmer2017neural}]
An MPNN layer on a graph $G$ with node features $\vx^{(k)}$ at $k$-th layer and edge features $\ve$ is of the following form
\begin{equation*}
\vx_i^{(k)}=\gamma^{(k)}\left(\vx_i^{(k-1)}, \pool_{j \in \mc{N}(i)} \phi^{(k)}\left(\vx_i^{(k-1)}, \vx_j^{(k-1)}, \ve_{j, i}\right)\right)
\end{equation*}

Here $\gamma: \mb{R}^d \times \mb{R}^{d'} \rightarrow \mb{R}^d$ is update function,
$\phi:\mb{R}^d \times \mb{R}^{d} \times \mb{R}^{d_e} \rightarrow \mb{R}^{d'}$ is message function where $d_e$ is the edge feature dimension, 
$\pool: \mc{M}  \rightarrow \mb{R}^d$ is permutation invariant aggregation function 
and $\mathcal{N}(i)$ is the neighbors of node $i$ in $G$.
Update/message/aggregation functions are usually parametrized by neural networks. For graphs of different types of edges and nodes, one can further extend MPNN to the heterogeneous setting.  We use $1, ..., n$ to index graph nodes and $\vn$ to denote the virtual node. 
\end{definition}



\begin{definition}[heterogeneous MPNN + VN layer]\label{def-hetero-mpnn-vn-layer} 
The heterogeneous MPNN + VN layer operates on two types of nodes: 1) virtual node and 2) graph nodes, denoted as \text{vn} and \text{gn}, and three types of edges: 1) \text{vn}-\text{gn} edge and 2) \text{gn}-\text{gn} edges and 3) \text{gn}-\text{vn} edges. It has the following form

\begin{equation}
\vx_{\vn}^{(k)}=\gamma_{\vn}^{(k)}\left(\vx_i^{(k-1)}, \pool_{j \in [n]}  \phi^{(k)}_{\vngn}\left(\vx_i^{(k-1)}, \vx_j^{(k-1)}, \ve_{j, i}\right)\right)  
\end{equation}
for the virtual node, and 
\begin{equation}
\begin{split}
\vx_i^{(k)}&=\gamma_{\gn}^{(k)}(\vx_i^{(k-1)}, \pool_{j \in \mathcal{N}_1(i)}  \phi^{(k)}_{\gnvn}\left(\vx_i^{(k-1)}, \vx_j^{(k-1)}, \ve_{j, i}\right)  \\
& +  \pool_{j \in \mathcal{N}_2(i)} \phi^{(k)}_{\gngn}\left(\vx_i^{(k-1)}, \vx_j^{(k-1)}, \ve_{j, i})\right)  
\end{split}
\end{equation}
for graph node. Here $\mc{N}_1(i)$ for graph node $i$ is the virtual node and $\mc{N}_2(i)$ is the set of neighboring graph nodes.
\end{definition}
Our proof of approximating self-attention layer $\mL$ with MPNN layers does not use the graph topology. Next, we introduce a simplified heterogeneous MPNN + VN layer, which will be used in the proof. It is easy to see that setting $\phi^{(k)}_{\gngn}$ to be 0 in \Cref{def-hetero-mpnn-vn-layer} recovers the simplified heterogeneous MPNN + VN layer.

\begin{definition}[simplified heterogeneous MPNN + VN layer]
\label{def:simplified-hetero-mpnn-vn}
A simplified heterogeneous MPNN + VN layer is the same as a heterogeneous MPNN + VN layer in \Cref{def-hetero-mpnn-vn-layer} except we set $\theta_{\gngn}$ to be 0. I.e., we have
\begin{equation*}
\vx_{\vn}^{(k)}=\gamma_{\vn}^{(k)}\left(\vx_i^{(k-1)}, \pool_{j \in [n]}  \phi^{(k)}_{\vngn}\left(\vx_i^{(k-1)}, \vx_j^{(k-1)}, \ve_{j, i}\right)\right)  
\end{equation*}
for the virtual node, and
\begin{equation*}
\vx_i^{(k)}=\gamma_{\gn}^{(k)}\left(\vx_i^{(k-1)}, \pool_{j \in \mathcal{N}_1(i)}  \phi^{(k)}_{\gnvn}\left(\vx_i^{(k-1)}, \vx_j^{(k-1)}, \ve_{j, i}\right)\right)  
\end{equation*}
for graph nodes. 
\end{definition}

Intuitively, adding the virtual node (VN) to MPNN makes it easy to compute certain quantities, for example, the mean of node features (which is hard for standard MPNN unless the depth is proportional to the diameter of the graph). Using VN thus makes it easy to implement for example the mean subtraction, which helps reduce over-smoothing and improves the performance of GNN \citep{yang2020revisiting,zhao2019pairnorm}. See more connection between MPNN + VN and over-smoothing in \Cref{subsec:over-smoothing}. 


\subsection{Assumptions}
We have two mild assumptions on feature space $\mathcal{X}\subset \mb{R}^{n \times d}$ and the regularity of target function $\mL$.

\begin{assumption}\label{AS-2} $ \forall i\in [n], \vx_i \in \mc{X}_i,  \norm{\vx_i} < C_1$. This implies $\mc{X}$ is compact.  
\end{assumption} 

\begin{assumption}\label{AS-3}  $\norm{\mW_Q}< C_2, \norm{\mW_K} < C_2, \norm{\mW_V} < C_2$ for target layer $\mL$. Combined with AS\ref{AS-2} on $\mathcal{X}$, this means $\alpha'(\vx_i, \vx_j)$ is both upper and lower bounded, which further implies $\sum_j e^{\alpha'(\vx_i, \vx_j)}$ be both upper bounded and lower bounded. 
\end{assumption}

\section{$\mc{O}(1)$-depth $\mc{O}(1)$-width MPNN + VN for unbiased approximation of attention}
\label{sec:shallow-narrow-attention}
The standard self-attention takes $\mc{O}(n^2)$ computational time, therefore not scalable for large graphs. Reducing the computational complexity of self-attention in Transformer is active research \citep{tay2020efficient}. 
In this section, we consider self-attention in a specific type of efficient transformers, Performer \citep{choromanski2020rethinking} and Linear Transformer \citep{katharopoulos2020transformers}. 


One full self-attention layer $\mL$ is of the following form
\begin{equation}
\label{equ:attention-kernel-repr}
\vx_i^{(l+1)}=\sum_{j=1}^n \frac{\kappa\left(\mW_Q^{(l)} \vx_i^{(l)}, \mW_K^{(l)} \vx_j^{(l)}\right)}{\sum_{k=1}^n \kappa\left(\mW_Q^{(l)} \vx_i^{(l)}, \mW_K^{(l)} \vx_k^{(l)}\right)} \cdot\left(\mW_V^{(l)} \vx_j^{(l)}\right)
\end{equation}

where $\kappa: \mb{R}^d \times \mb{R}^d \rightarrow \mb{R}$ is the softmax kernel $\kappa(\vx, \vy):=\exp(\vx^T\vy)$. The kernel function can be approximated via $\kappa(\vx, \vy) = \ip{\Phi(\vx), \Phi(\vy)}_{\mc{V}} \approx \phi(\vx)^T\phi(\vy)$ where the first equation is by Mercer's theorem and $\phi(\cdot): \mb{R}^d \rightarrow \mb{R}^m $ is a low-dimensional feature map with random transformation. For Performer \citep{choromanski2020rethinking}, the choice of $\phi$ is taken as $\phi(\vx)=\frac{\exp \left(\frac{-\|\vx\|_2^2}{2}\right)}{\sqrt{m}}\left[\exp \left(\vw_1^{T} \vx\right), \cdots, \exp \left(\vw_m^{T} \vx\right)\right]$ where $\vw_k \sim \mathcal{N}\left(0, I_d\right)$ is i.i.d sampled random variable. For Linear Transformer \citep{katharopoulos2020transformers}, $\phi(\vx)=\operatorname{elu}(\vx)+1$. 

By switching $\kappa(\vx, \vy)$ to be $\phi(\vx)^T\phi(\vy)$, and denote $\vq_i=\mW_Q^{(l)} \vx_i^{(l)}, \vk_i=\mW_K^{(l)} \vx_i^{(l)} \text { and } \vv_i=\mW_V^{(l)} \vx_i^{(l)}$, the approximated version of \Cref{equ:attention-kernel-repr} by Performer and Linear Transformer becomes 
\begin{equation}
\begin{split}
\label{equ:modified-layer}
\vx_i^{(l+1)}&=\sum_{j=1}^n \frac{\phi\left(\vq_i\right)^{T} \phi\left(\vk_j\right)}{\sum_{k=1}^n \phi\left(\vq_i\right)^{T} \phi\left(\vk_k\right)} \cdot \vv_j \\
& =\frac{\left(\phi\left(\vq_i\right)^T \sum_{j=1}^n \phi\left(\vk_j\right) \otimes \vv_j\right)^T}{\phi\left(\vq_i\right)^{T} \sum_{k=1}^n \phi\left(\vk_k\right)}. \\
\end{split}
\end{equation}
where we use the matrix multiplication association rule to derive the second equality. 

The key advantage of \Cref{equ:modified-layer} is that $\sum_{j=1}^n \phi\left(\vk_j\right)$ and $\sum_{j=1}^n \phi(\vk_j) \otimes \vv_j$ can be approximated by the virtual node, and shared for all graph nodes, using only $\mc{O}(1)$ layers of MPNNs.  
We denote the self-attention layer of this form in \Cref{equ:modified-layer} as $\mL_{\text{Performer}}$. Linear Transformer differs from Performer by choosing a different form of $\phi(\vx)=\operatorname{Relu}(\vx)+1$ in its self-attention layer $\mL_{\text{Linear-Transformer}}$.  

In particular, the VN will approximate $\sum_{j=1}^n \phi\left(\vk_j\right)$ and $\sum_{j=1}^n \phi\left(\vk_j\right) \otimes \vv_j$, and represent it as its feature. Both $\phi\left(\vk_j\right)$ and $\phi\left(\vk_j\right) \otimes \vv_j$ can be approximated arbitrarily well by an MLP with constant width (constant in $n$ but can be exponential in $d$) and depth. Note that $\phi(\vk_j) \otimes \vv_j \in \mb{R}^{dm}$ but can be reshaped to 1 dimensional feature vector.

More specifically, the initial feature for the virtual node is $\bm{1}_{(d+1)m}$, where $d$ is the dimension of node features and $m$ is the number of random projections $\omega_i$.  
Message function + aggregation function for virtual node 
$\pool \phi_{\vngn}: \mb{R}^{(d+1)m} \times \mc{M} \rightarrow \mb{R}^{(d+1)m} $ is
\begin{equation}
\label{eq:vn-gn}
\begin{split}
 & \pool_{j \in [n]} \phi_{\vngn}^{(k)}(\cdot, \{\vx_i\}_i)  =  [\sum_{j=1}^n \phi\left(\vk_j\right), \\ & \tooned{\sum_{j=1}^n \phi\left(\vk_j\right) \otimes \vv_j}]
 \end{split}
\end{equation}
 where $\tooned{\cdot}$ flattens a 2D matrix to a 1D vector in raster order. This function can be arbitrarily approximated  by MLP. Note that the virtual node's feature dimension is $(d+1)m$ (where recall $m$ is the dimension of the feature map $\phi$ used in the linear transformer/Performer), which is larger than the dimension of the graph node $d$. This is consistent with the early intuition that the virtual node might be overloaded when passing information among nodes. 
 The update function for virtual node $\gamma_{\vn}: $ $\mb{R}^{(d+1)m} \times \mb{R}^{(d+1)m} \rightarrow \mb{R}^{(d+1)m}$ is just coping the second argument, which can be exactly implemented by MLP. 

VN then sends its message back to all other nodes, where each graph node $i$ applies the update function $\gamma_{\gn}: \mb{R}^{(d+1)m} \times \mb{R}^d \rightarrow \mb{R}^d$ of the form
\begin{equation}
\label{eq:gn}
\begin{split}
 & \gamma_{\gn} (\vx_i,  [\sum_{j=1}^n \phi\left(\vk_j\right), \tooned{\sum_{j=1}^n \phi\left(\vk_j\right) \otimes \vv_j}])\\ 
 & = \frac{\left(\phi\left(\vq_i\right) \sum_{j=1}^n \phi\left(\vk_j\right) \otimes \vv_j\right)^T}{\phi\left(\vq_i\right)^{T} \sum_{k=1}^n \phi\left(\vk_k\right)} 
 \end{split}
\end{equation}
 to update the graph node feature. 

As the update function $\gamma_{\gn}$ can not be computed exactly in MLP, what is left is to show that error induced by using MLP to approximate $\pool \phi_{\vngn}$ and $\gamma_{\gn}$ in \Cref{eq:vn-gn} and \Cref{eq:gn} can be made arbitrarily small. 

\begin{restatable}{theorem}{doubleconstant}
\label{thm:constant-depth-constant-width}
Under the AS\ref{AS-2} and AS\ref{AS-3}, MPNN + VN of $\mc{O}(1)$ width and $\mc{O}(1)$ depth can approximate $\mL_{\text{Performer}}$ and $\mL_{\text{Linear-Transformer}}$ arbitrarily well. 
\end{restatable}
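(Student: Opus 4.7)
The plan is to construct a two-layer MPNN + VN (constant depth) with width depending only on $d$ and $m$ (not on $n$) that realizes the computations in \Cref{eq:vn-gn} and \Cref{eq:gn} up to arbitrarily small uniform error. The central observation, already anticipated by the equation preceding the theorem, is that the sums $\vs := \sum_{j=1}^n \phi(\vk_j)$ and $\mT := \sum_{j=1}^n \phi(\vk_j) \otimes \vv_j$ do not depend on the query index $i$; hence they can be accumulated once on the virtual node and broadcast back to every graph node, reducing the nominally quadratic attention to one round of per-node featurization, one aggregation at the VN, and one per-node post-processing.

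First, I would use a constant-width MLP at each graph node $j$ to approximate the continuous map $\vx_j \mapsto \bigl(\phi(\vk_j),\, \phi(\vk_j) \otimes \vv_j\bigr)$, with the tensor reshaped to a vector in $\R{dm}$. By AS\ref{AS-2} the domain $\mc{X}$ is compact, and by AS\ref{AS-3} together with continuity of $\phi$ (Gaussian-exponential for Performer, $\mathrm{elu}+1$ for Linear Transformer, with $\vw_k$ sampled once and fixed), the target is continuous on this compact set, so universal approximation for MLPs yields uniform error at most $\varepsilon/n$ with width $\mc{O}_{d,m}(1)$. The VN message-and-aggregation $\pool\,\phi_{\vngn}^{(k)}$ then sums these $n$ vectors, so by the triangle inequality the VN holds approximations $\widetilde{\vs},\widetilde{\mT}$ satisfying $\|\widetilde{\vs}-\vs\|, \|\widetilde{\mT}-\mT\| \le \varepsilon$. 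The VN update $\gamma_{\vn}^{(k)}$ is a copy map and is realized exactly by a small MLP; the VN feature dimension $(d+1)m$ is still $\mc{O}(1)$ in $n$.

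In the second layer the VN broadcasts $(\widetilde{\vs},\widetilde{\mT})$ to every graph node, and each graph node applies an MLP approximating
\[
\gamma_{\gn}(\vx_i, \vs, \mT) \;=\; \frac{\bigl(\phi(\vq_i)^T \mT\bigr)^T}{\phi(\vq_i)^T \vs}.
\]
The crux of the argument, and the step where I expect the main technical friction, is that this target map involves a division, so I must ensure the denominator is uniformly bounded away from zero on the admissible domain. For Performer $\phi$ has strictly positive coordinates, and for Linear Transformer $\phi=\mathrm{elu}+1$ is coordinatewise bounded below by a positive constant on bounded inputs; combined with AS\ref{AS-2}--AS\ref{AS-3} this forces $\phi(\vq_i)^T \vs \ge n\delta$ for some $\delta>0$ depending only on $C_1, C_2$. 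Hence the target map is Lipschitz continuous on the relevant compact set of triples $(\vx_i, \vs, \mT)$, and another constant-width MLP approximates it to within $\varepsilon$. Lipschitzness then propagates the $\mc{O}(\varepsilon)$ perturbation on $(\widetilde{\vs},\widetilde{\mT})$ through $\gamma_{\gn}$ to a uniform $\mc{O}(\varepsilon)$ error on the final output, and since $\varepsilon$ was arbitrary, the claim follows.
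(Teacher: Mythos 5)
Your proposal is correct and follows essentially the same route as the paper's proof: decompose the attention computation into a per-node featurization $\vx_j \mapsto (\phi(\vk_j), \phi(\vk_j)\otimes \vv_j)$, aggregate at the virtual node, broadcast, and post-process with a division; use AS\ref{AS-2} and AS\ref{AS-3} for compactness and a uniform lower bound on the denominator; invoke MLP universality on compact domains throughout. You are somewhat more explicit than the paper about the error-propagation bookkeeping (the $\varepsilon/n$ budget per node, the triangle inequality at the aggregation step, and the Lipschitz argument to push the perturbation of $(\widetilde{\vs},\widetilde{\mT})$ through $\gamma_{\gn}$), which the paper compresses into a single appeal to uniform continuity, but there is no substantive difference in strategy.
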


\begin{proof}
We first prove the case of $\mL_{\text{Performer}}$.
We can decompose our target function as the composition of $\pool_{j \in [n]} \phi_{\vngn}^{(k)}$, $\gamma_{\gn}$ and $\phi$. 
By the uniform continuity of the functions, it suffices to show that 1) we can approximate $\phi$, 2) we can approximate operations in $\gamma_{\gn}$ and $\pool \phi_{\vngn}$ arbitrarily well on the compact domain, and 3) the denominator $\phi\left(\vq_i\right)^{T} \sum_{k=1}^n \phi\left(\vk_k\right)$ is uniformly lower bounded by a positive number for any node features in $\mc{X}$.

For 1), each component of $\phi$ is continuous and all inputs $\vk_j, \mathbf{q}_j$ lie in the compact domain so $\phi$ can be approximated arbitrarily well by MLP with $\mc{O}(1)$ width and $\mc{O}(1)$ depth \citep{cybenko1989approximation}. 

For 2), we need to approximate the operations in $\gamma_{\gn}$ and $\pool \phi_{\vngn}$, i.e., approximate multiplication, and vector-scalar division arbitrarily well.  As all those operations are continuous, it boils down to showing that all operands lie in a compact domain. By assumption AS\ref{AS-2} and AS\ref{AS-3} on $\mW_Q, \mW_K, \mW_V$ and input feature $\mc{X}$, we know that $\vq_i, \vk_i, \vv_i$ lies in a compact domain for all graph nodes $i$. As $\phi$ is continuous, this implies that $\phi(\vq_i), \sum_{j=1}^n \phi(\vk_j) \otimes \vv_j$ lies in a compact domain ($n$ is fixed), therefore the numerator lies in a compact domain. Lastly, since all operations do not involve $n$, the depth and width are constant in $n$.  

For 3), it is easy to see that $\phi\left(\vq_i\right)^{T} \sum_{k=1}^n \phi\left(\vk_k\right)$ is always positive.  We just need to show that the denominator is bound from below by a positive constant. For Performer, $\phi(\vx)=\frac{\exp \left(\frac{-\|\vx\|_2^2}{2}\right)}{\sqrt{m}}\left[\exp \left(\vw_1^{T} \vx\right), \cdots, \exp \left(\vw_m^{T} \vx\right)\right]$ where $\vw_k \sim \mathcal{N}\left(0, I_d\right)$. As all norm of input $\vx$ to $\phi$ is upper bounded by AS\ref{AS-2}, $\exp(\frac{-\|\vx\|_2^2}{2})$ is lower bounded. As $m$ is fixed, we know that $\norm{\vw^T_i \vx} \leq \norm{\vw_i} \norm{\vx}$, which implies that $\vw_i^T \vx$ is lower bounded by $-\norm{\vw_i} \norm{\vx}$ 
which further implies that $\exp(\vw_i^T \vx)$ is lower bounded. This means that $\phi\left(\vq_i\right)^{T} \sum_{k=1}^n \phi\left(\vk_k\right)$ is lower bounded. 

For Linear Transformer, the proof is essentially the same as above. We only need to show that $\phi(\vx)=\operatorname{elu}(\vx)+1$ is continuous and positive, which is indeed the case. 
\end{proof}

Besides Performers, there are many other different ways of obtaining linear complexity. In \Cref{subsec:ohter-linear-transformer}, we discuss the limitation of MPNN + VN on approximating other types of efficient transformers such as Linformer \citep{wang2020linformer} and Sparse Transformer \citep{child2019generating}.  

\section{$\mc{O}(1)$ depth $\mc{O}(n^d)$ width MPNN + VN}
\label{sec:shadow-wide-mpnn}

\begin{figure}[t!]
  \centering
  \includegraphics[width=1\linewidth]{./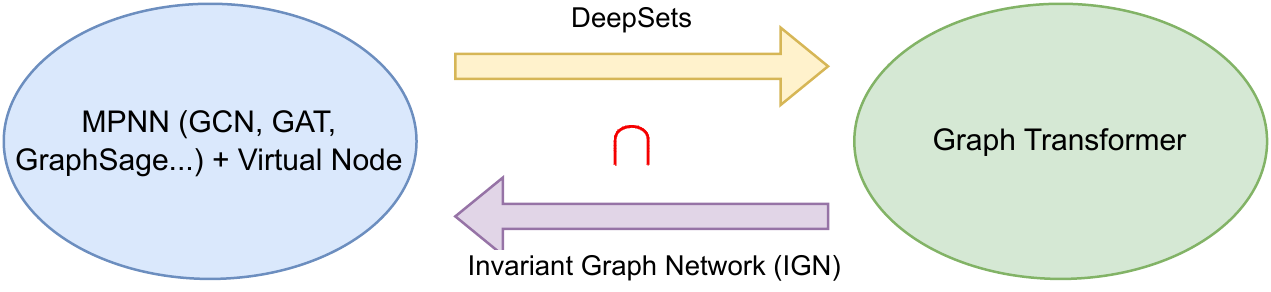}
\caption{The link between MPNN and GT is drawn via \DS{} in \Cref{sec:shadow-wide-mpnn} of our paper and Invariant Graph Network (IGN) in \citet{kim2022pure}. Interestingly, IGN is a generalization of \DS{} \citep{maron2018invariant}. }
\label{fig:mpnn+gt}
\end{figure}

We have shown that the MPNN + VN can approximate self-attention in Performer and Linear Transformer using only $\mc{O}(1)$ depth and $\mc{O}(1)$ width. 
One may naturally wonder whether MPNN + VN can approximate the self-attention layer in the \textit{full} transformer. In this section,
we show that MPNN + VN with $O(1)$ depth (number of layers), but with $\mc{O}(n^d)$ width, can approximate 1 self-attention layer (and full transformer) arbitrarily well. 

The main observation is that MPNN + VN is able to exactly simulate (not just approximate) equivariant \DS{} \citep{zaheer2017deep}, which is proved to be universal in approximating any permutation invariant/equivariant maps \citep{zaheer2017deep,segol2019universal}. Since the self-attention layer is permutation equivariant, this implies that MPNN + VN can approximate the self-attention layer (and full transformer) with $\mc{O}(1)$ depth and $\mc{O}(n^d)$ width following a result on \DS{} from \citet{segol2019universal}.

We first introduce the permutation equivariant map, equivariant \DS{}, and permutation equivariant universality.

\begin{definition}[permutation equivariant map]
A map $\mF: \mb{R}^{n \times k} \rightarrow \mb{R}^{n \times l}$ satisfying $\mF(\sigma \cdot \mX)=\sigma \cdot \mF(\mX)$ for all $\sigma \in S_n$ and $\mX \in \mb{R}^{n \times d}$ is called permutation equivariant.
\end{definition}

\begin{definition}[equivariant \DS{} of  \citet{zaheer2017deep}]
Equivariant \DS{} has the following form $\mF(\mX)=\mL_m^{\ds} \circ \nu \circ \cdots \circ \nu \circ \mL^{\ds}_1(\mX)$, where $\mL^{\ds}_i$ is a linear permutation equivariant layer and $\nu$ is a nonlinear layer such as ReLU. 
The linear permutation equivariant layer in \DS{} has the following form $\mL^{\ds}_i(\mX)=\mX \boldsymbol{A}+\frac{1}{n} \mathbf{1 1}^T \mX \mB+\mathbf{1} \boldsymbol{c}^T$, where $\mA, \mB \in \mb{R}^{d_{i} \times d_{i+1}}$, $\vc \in \mb{R}^{d_{i+1}}$ is the weights and bias in layer $i$, and $\nu$ is ReLU. 
\end{definition}

\begin{definition}[permutation equivariant universality]
\label{def:ds-universality}
Given a compact domain $\mc{X}$ of $\mb{R}^{n \times d_{\text{in}}}$, permutation equivariant universality of a model $\mF: \mb{R}^{n \times d_{\text{in}}} \rightarrow \mb{R}^{n \times d_{\text{out}}}$ means that for every permutation equivariant continuous function $\mH: \mb{R}^{n \times d_{\text{in}}} \rightarrow \mb{R}^{n \times d_{\text{out}}}$ defined over $\mc{X}$, and any $\epsilon>0$, there exists a choice of $m$ (i.e., network depth), $d_i$ (i.e., network width at layer $i$) and the trainable parameters of $\mF$ so that $\|\mH(\mX)-\mF(\mX)\|_{\infty}<\epsilon$ for all $\mX \in \mc{X}$.
\end{definition}

The universality of equivariant \DS{} is stated as follows. 

\begin{theorem}[\citet{segol2019universal}]
\label{thm:ds-universality}
\DS{} with constant layer is universal. Using ReLU activation the width $\omega:=\text{max}_i d_i $ ($d_i$ is the width for $i$-th layer of \DS{}) required for universal permutation equivariant
network satisfies $\omega \leq d_{\text{out}}+d_{\text{in}}+\left(\begin{array}{c} n+d_{\text{in}} \\ d_{\text{in}} \end{array}\right) = \mc{O}(n^{d_\text{in}})$.
\end{theorem}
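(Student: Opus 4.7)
The plan is to establish universality via the classical two-step recipe for equivariant set networks: first reduce to the permutation-invariant case, then invoke a polynomial-generator argument based on power-sums. Concretely, an equivariant continuous map $\mH: \mb{R}^{n\times d_{\text{in}}} \to \mb{R}^{n\times d_{\text{out}}}$ has the property that its $i$-th output row is determined by $\vx_i$ together with the multiset $\{\vx_j\}_{j\in[n]}$; thus if I can approximate arbitrary continuous permutation-invariant functions of the full input and simultaneously carry the row $\vx_i$ alongside, I can build $\mH$ row-by-row. The DeepSets equivariant layer $\mL^{\ds}(\mX) = \mX\mA + \tfrac{1}{n}\mathbf{1}\mathbf{1}^T \mX \mB + \mathbf{1}\vc^T$ is precisely designed to enable this augmentation: the first term preserves per-row features, and the second injects a global aggregate into every row, after which a row-wise MLP (implementable by further DeepSets layers with $\mB=0$) can compute arbitrary continuous transformations.

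Next, I would prove the invariant approximation piece by showing that the ring of real-valued permutation-invariant polynomials on $\mb{R}^{n\times d_{\text{in}}}$ is generated by the multivariate power-sums
\[
p_{\alpha}(\mX) \;=\; \sum_{i=1}^{n} \prod_{j=1}^{d_{\text{in}}} x_{ij}^{\alpha_j}, \qquad \alpha \in \mathbb{N}^{d_{\text{in}}},
\]
and that a multivariate generalization of Newton's identities lets one truncate to $|\alpha|\le n$. The number of such multi-indices (including $\alpha=0$) is exactly $\binom{n+d_{\text{in}}}{d_{\text{in}}}$. These power-sums are computable by a single linear equivariant layer followed by a ReLU-MLP that raises each coordinate to the appropriate monomial and then sums across the set dimension (the $\mathbf{1}\mathbf{1}^T\mX\mB$ block provides the summation). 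Stone--Weierstrass on the compact domain $\mathcal{X}$ then yields uniform approximation of any continuous invariant function by polynomials in the $p_{\alpha}$, and Cybenko-style universality of ReLU MLPs lets a second equivariant-DeepSets block compute that polynomial combination.

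To assemble the final equivariant network, I stack three DeepSets blocks: (i) a row-wise lift producing the monomials $\prod_j x_{ij}^{\alpha_j}$ inside each row, followed by a $\tfrac{1}{n}\mathbf{1}\mathbf{1}^T$-aggregation producing the $\binom{n+d_{\text{in}}}{d_{\text{in}}}$ power-sum features at every row; (ii) a concatenation step that keeps the original $d_{\text{in}}$-dimensional row $\vx_i$ untouched while appending the global summaries, widening each row to $d_{\text{in}} + \binom{n+d_{\text{in}}}{d_{\text{in}}}$; (iii) a row-wise MLP that maps the augmented row to the $d_{\text{out}}$-dimensional target $\mH(\mX)_i$, which is well-defined by the equivariant-to-invariant reduction and can be approximated uniformly. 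Counting channels across these stages gives the stated width bound $\omega \le d_{\text{out}} + d_{\text{in}} + \binom{n+d_{\text{in}}}{d_{\text{in}}} = \mathcal{O}(n^{d_{\text{in}}})$.

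The main obstacle is the polynomial-generation step: showing that power-sums with $|\alpha|\le n$ \emph{suffice} to separate orbits of $S_n$ on $\mathcal{X}$. For $d_{\text{in}}=1$ this is the classical Newton identity expressing elementary symmetric polynomials in $p_1,\dots,p_n$; extending to $d_{\text{in}}>1$ requires a careful multi-index induction (or an appeal to the fundamental theorem on multi-symmetric polynomials). Once this separation is in hand, Stone--Weierstrass applied to the algebra generated by these finitely many continuous functions closes the argument, and the counting $\binom{n+d_{\text{in}}}{d_{\text{in}}}$ is simply the number of multi-indices $\alpha\in\mathbb{N}^{d_{\text{in}}}$ with $|\alpha|\le n$. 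The other technical care point is confirming that the ReLU-MLP of constant depth inside each block can implement the polynomial map on the compact input region to arbitrary precision without inflating width beyond the stated bound, which follows from standard universal approximation on compact domains.
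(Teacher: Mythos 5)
The paper does not supply its own proof of this statement: it is quoted directly from \citet{segol2019universal} and invoked as a known result. Your reconstruction follows the same strategy as that reference: reduce the equivariant case to the row $\vx_i$ plus a permutation-invariant summary of the full multiset, generate the multi-symmetric invariants with power-sum polynomials $p_\alpha$ for $|\alpha|\le n$, invoke Stone--Weierstrass on the compact domain, and implement each stage with a constant number of equivariant DeepSets layers. The count $\binom{n+d_{\text{in}}}{d_{\text{in}}}$ of multi-indices $\alpha\in\mathbb{N}^{d_{\text{in}}}$ with $|\alpha|\le n$ is correct, and you rightly flag that the degree-$\le n$ sufficiency is the real content (the multi-symmetric analogue of Newton's identities).

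One piece of your accounting is loose: you compute an intermediate width of $d_{\text{in}}+\binom{n+d_{\text{in}}}{d_{\text{in}}}$ from the augmentation step, then assert the bound $d_{\text{out}}+d_{\text{in}}+\binom{n+d_{\text{in}}}{d_{\text{in}}}$ without explaining where the additional $d_{\text{out}}$ channels arise. In the cited proof the $d_{\text{out}}$ summand is there because the network must hold the output coordinates it is building \emph{concurrently} with the per-row inputs and the global power-sum features; the three stages are not entirely disjoint. This is a bookkeeping gap rather than a conceptual one, and the core argument you lay out matches the cited proof.
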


We are now ready to state our main theorem. 

\begin{restatable}{theorem}{constantdepth}
\label{thm:constant-depth} 
MPNN + VN can simulate (not just approximate) equivariant \DS{}: $\mb{R}^{n \times d} \rightarrow \mb{R}^{n \times d}$. The depth and width of MPNN + VN needed to simulate \DS{} is up to a constant factor of the depth and width of \DS{}. 
This implies that MPNN + VN of $\mc{O}(1)$ depth and $\mc{O}(n^d)$ width is permutation equivariant universal, and can approximate self-attention layer and transformers arbitrarily well. 
\end{restatable}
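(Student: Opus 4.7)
The plan is to simulate each equivariant DeepSets linear layer $\mL^{\ds}(\mX) = \mX \mA + \frac{1}{n}\mathbf{1}\mathbf{1}^T \mX \mB + \mathbf{1} \vc^T$ followed by a ReLU nonlinearity by a single simplified heterogeneous MPNN + VN layer as in \Cref{def:simplified-hetero-mpnn-vn}. The three summands decompose naturally into operations the architecture already supports: $\mX \mA$ is a pointwise linear map on each graph node, $\frac{1}{n}\mathbf{1}\mathbf{1}^T \mX \mB$ broadcasts a linear transform of the average of all node features, and $\mathbf{1} \vc^T$ is a constant shift.

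First I would instantiate the VN step to compute the mean. Set the message $\phi^{(k)}_{\vngn}$ to be the identity on $\vx_j^{(k-1)}$ and choose $\pool$ to be mean pooling; the VN update $\gamma_{\vn}^{(k)}$ simply copies its second argument, so the VN's new feature is exactly $\frac{1}{n}\sum_{j=1}^n \vx_j^{(k-1)}$. Next, each graph node receives the VN feature via $\phi^{(k)}_{\gnvn}$ (again the identity) and calls the update $\gamma_{\gn}^{(k)}(\vx_i, \bar{\vx}) = \nu\bigl(\vx_i \mA + \bar{\vx}\, \mB + \vc\bigr)$. Since $\gamma_{\gn}^{(k)}$ is an MLP, it can implement this affine map plus ReLU exactly, not just approximately. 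Thus one MPNN + VN layer reproduces one $\mL^{\ds}$ layer composed with $\nu$ on the nose, with width equal to the width of $\mL^{\ds}$.

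Composing layer-by-layer, an $m$-layer equivariant DeepSets network is exactly simulated by $m$ MPNN + VN layers of matching width. Invoking \Cref{thm:ds-universality} of \citet{segol2019universal}, MPNN + VN with $\mc{O}(1)$ depth and width $\omega \leq d + d + \binom{n+d}{d} = \mc{O}(n^{d})$ is permutation-equivariant universal on any compact $\mc{X} \subseteq \R{n \times d}$. A single self-attention layer $\mL$ is permutation equivariant, and under AS\ref{AS-2}–AS\ref{AS-3} the softmax denominator is bounded away from zero, so $\mL$ is continuous on the compact domain $\mc{X}$; therefore MPNN + VN approximates $\mL$ arbitrarily well in sup norm. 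The same conclusion extends to a full transformer block (self-attention composed with a pointwise MLP and residual/LayerNorm, all of which are continuous and preserve permutation equivariance).

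The only subtlety I expect is matching the normalization conventions between DeepSets and MPNN aggregation. DeepSets uses $\frac{1}{n}\mathbf{1}\mathbf{1}^T \mX$, so I would either take $\pool$ to be mean pooling directly, or take $\pool$ to be sum pooling and absorb the $\frac{1}{n}$ into the graph-node MLP $\gamma_{\gn}$; the latter is fine because \Cref{thm:ds-universality} and the whole theorem statement are for a fixed $n$, so $1/n$ is just a scalar constant inside the MLP. Beyond this bookkeeping, the reduction is routine: every primitive used by DeepSets (a per-element linear map, a mean, a broadcast, a bias, a ReLU) maps cleanly onto either the VN aggregation, the VN-to-node broadcast, or a pointwise MLP step in \Cref{def:simplified-hetero-mpnn-vn}, which is why the simulation is exact rather than approximate.
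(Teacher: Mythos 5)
Your argument takes the same overall route as the paper (simulate each equivariant DeepSets linear layer by having the VN hold the mean and having each graph node form the affine combination $\vx_i\mA + \bar{\vx}\mB + \vc$ followed by $\nu$, then invoke \citet{segol2019universal} for universality), and the conclusion is correct. The one place your write-up diverges from the paper and hides a subtle timing bug is the claim that a \emph{single} MPNN + VN layer per DeepSets layer suffices. Under \Cref{def:simplified-hetero-mpnn-vn} the VN and graph nodes update \emph{in parallel}: at step $k$ the graph node sees $\vz_{\vn}^{(k-1)}$, not the mean $\vz_{\vn}^{(k)}$ that the VN is computing at the same step. Your ``First \dots Next \dots'' phrasing implicitly assumes a sequential update (VN first, then graph nodes read the freshly updated VN), which is not what the definition gives you. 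The paper resolves this by spending two MPNN + VN layers per $\mL^{\ds}_i\circ\nu$ block: layer $2k-1$ puts the mean of the step-$(2k-2)$ features into the VN and computes $\vz^{(2k-2)}\mA$ at each node; layer $2k$ has each node read that VN feature and add $\vz_{\vn}^{(2k-1)}\mB + \vc$ before the nonlinearity. Since the theorem statement only asserts a constant-factor depth overhead, the discrepancy does not affect the conclusion --- but as stated, your one-layer-per-DeepSets-layer claim of ``exact'' simulation is not correct under the paper's layer semantics, and you should either fix the accounting to two layers or explicitly state and justify a sequential-update variant of the VN layer.
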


\begin{table*}[t!]
    \caption{Baselines for \pepfunc (graph classification) and \pepstruct (graph regression). The performance metric is Average Precision (AP) for classification and MAE for regression. \textbf{Bold}: Best score.} 
    \label{tab:experiments_peptides}
    \tabtopvspace
    \begin{adjustwidth}{-2.5 cm}{-2.5 cm}\centering
   \scalebox{0.8}{    
    \setlength\tabcolsep{4pt} 
    \begin{tabular}{l c c c c c}\toprule
    \multirow{2}{*}{\textbf{Model}} & \multirow{2}{*}{\textbf{\# Params.}} & \multicolumn{2}{c}{\pepfunc} & \multicolumn{2}{c}{\pepstruct} \\ \cmidrule(lr){3-4} \cmidrule(lr){5-6}
                                    &                                      & \textbf{Test AP before VN}  & \textbf{Test AP after VN $\uparrow$}                                    & \textbf{Test MAE before VN} & \textbf{Test MAE after VN $\downarrow$} \\\midrule
    GCN                             & 508k                                 & 0.5930$\pm$0.0023            & 0.6623$\pm$0.0038                                                       & 0.3496$\pm$0.0013           & \first{0.2488$\pm$0.0021} \\
    GINE                            & 476k                                 & 0.5498$\pm$0.0079            & 0.6346$\pm$0.0071                                                       & 0.3547$\pm$0.0045           & 0.2584$\pm$0.0011 \\
    GatedGCN                        & 509k                                 & 0.5864$\pm$0.0077            & 0.6635$\pm$0.0024                                                       & 0.3420$\pm$0.0013           & 0.2523$\pm$0.0016 \\
    GatedGCN+RWSE                   & 506k                                 & 0.6069$\pm$0.0035            & \first{0.6685$\pm$0.0062}                                               & 0.3357$\pm$0.0006           & 0.2529$\pm$0.0009 \\ \midrule
    Transformer+LapPE               & 488k                                 & 0.6326$\pm$0.0126            & -                                                                       & 0.2529$\pm$0.0016           & - \\
    SAN+LapPE                       & 493k                                 & 0.6384$\pm$0.0121            & -                                                                       & 0.2683$\pm$0.0043           & - \\
    SAN+RWSE                        & 500k                                 & 0.6439$\pm$0.0075            & -                                                                       & 0.2545$\pm$0.0012           & - \\
    \bottomrule
    \end{tabular}
    }
    \end{adjustwidth}
\end{table*}

\begin{proof}
Equivariant \DS{} has the following form $\mF(\mX)=\mL^{\ds}_m \circ \nu \circ \cdots \circ \nu \circ \mL^{\ds}_1(\mX)$, where $\mL^{\ds}_i$ is the linear permutation equivariant layer and $\nu$ is an entrywise nonlinear activation layer.
Recall that the linear equivariant layer has the form $\mL^{\ds}_i(\mX)=\mX \boldsymbol{A}+\frac{1}{n} \mathbf{1 1}^T \mX \mB+\mathbf{1} \boldsymbol{c}^T$. 
As one can use the same nonlinear entrywise activation layer $\nu$ in MPNN + VN, it suffices to prove that MPNN + VN can compute linear permutation equivariant layer $\mL^{\ds}$. Now we show that 2 layers of MPNN + VN can exactly simulate any given linear permutation equivariant layer $\mL^{\ds}$.

Specifically, at layer 0, we initialized the node features as follows: The VN node feature is set to 0, while the node feature for the $i$-th graph node is set up as $\vx_i \in \mathbb{R}^d$.  

At layer 1: VN node feature is $\frac{1}{n} \mathbf{1 1}^T \mX$, average of node features. The collection of features over $n$ graph node feature is $\mX \mA$. 
We only need to transform graph node features by a linear transformation, and set the VN feature as the average of graph node features in the last iteration. Both can be exactly implemented in \Cref{def:simplified-hetero-mpnn-vn} of simplified heterogeneous MPNN + VN. 

At layer 2: VN node feature is set to be 0, and the graph node feature is $\mX \boldsymbol{A}+\frac{1}{n} \mathbf{1 1}^T \mX \mB+\mathbf{1} \boldsymbol{c}^T$. Here we only need to perform the matrix multiplication of the VN feature with $\mB$, as well as add a bias $\vc$. This can be done by implementing a linear function for $\gamma_{\gn}$. 

It is easy to see the width required for MPNN + VN to simulate \DS{} is constant.
Thus, one can use 2 layers of MPNN + VN to compute linear permutation equivariant layer $\mL^{\ds}_i$, which implies that MPNN + VN can simulate 1 layer of \DS{} exactly with constant depth and constant width (independent of $n$). Then by the universality of \DS{}, stated in \Cref{thm:ds-universality}, we conclude that MPNN + VN is also permutation equivariant universal, which implies that the constant layer of MPNN + VN with $\mc{O}(n^d)$ width is able to approximate any continuous equivariant maps. As the self-attention layer $\mL$ and full transformer are both continuous and equivariant, they can be approximated by MPNN + VN arbitrarily well.      
\end{proof}
Thanks to the connection between MPNN + VN with \DS{}, there is no extra assumption on $\mc{X}$ 
except for being compact. The drawback on the other hand is that the upper bound on the computational complexity needed to approximate the self-attention with wide MPNN + VN is worse than directly computing self-attention when $d>2$. 

\section{$\mc{O}(n)$ depth $\mc{O}(1)$ width MPNN + VN}
\label{sec:deep-vn}

The previous section shows that we can approximate a full attention layer in Transformer using MPNN with $\mc{O}(1)$ depth but $\mc{O}(n^d)$ width where $n$ is the number of nodes and $d$ is the dimension of node features. 
In practice, it is not desirable to have the width depend on the graph size. 

In this section, we hope to study MPNN + VNs with $\mc{O}(1)$ width and their ability to approximate a self-attention layer in the Transformer. However, this appears to be much more challenging. Our result in this section only shows that for a rather restrictive family of input graphs (see Assumption \ref{AS-1} below), we can approximate a full self-attention layer of transformer with an MPNN + VN of $\mc{O}(1)$ width and $\mc{O}(n)$ depth. We leave the question of MPNN + VN's ability in approximate transformers for more general families of graphs for future investigation.




We first introduce the notion of $(\mv,\delta)$ separable node features. This is needed to ensure that VN can approximately select one node feature to process at each iteration with attention $\alpha_{\vn}$, the self-attention in the virtual node. 




 \begin{definition}[$(\mv,\delta)$ separable by $\alpha$]
\label{vdeltaseparable}
Given a graph $G$ of size $n$ and a fixed $\mV \in \mb{R}^{n\times d} = [\vv_1, ..., \vv_n]$ 
and $\bar{\alpha} \in \mc{A}$, we say node feature $\mX\in \mb{R}^{n\times d}$ of $G$ is 
$(\mv,\delta)$ separable by some $\bar{\alpha}$ if the following holds. For any node feature $\vx_i$, there exist weights $\mW^{\bar{\alpha}}_K, \mW^{\bar{\alpha}}_Q$ in attention score $\bar{\alpha}$ 
such that $\bar{\alpha}(\vx_i, \vv_i) > \max_{j\neq i} \bar{\alpha}(\vx_j, \vv_i) + \delta$. We say set $\mathcal{X}$ is $(\mv,\delta)$ 
separable by $\bar{\alpha}$ if every element $\mX \in \mathcal{X}$ is $(\mv,\delta)$ separable by $\bar{\alpha}$.
 \end{definition}




\begin{table*}[t]
    \caption{Test performance in graph-level OGB benchmarks \citep{hu2020open}. Shown is the mean~$\pm$~s.d.~of 10 runs. 
    }
    \label{tab:results_ogb}
    \tabtopvspace
    \centering
    \fontsize{8.5pt}{8.5pt}\selectfont
    \begin{tabular}{lccccc}\toprule
    \multirow{2}{*}{\textbf{Model}} &\textbf{ogbg-molhiv} &\textbf{ogbg-molpcba} &\textbf{ogbg-ppa} &\textbf{ogbg-code2} \\\cmidrule{2-5}
                               & \textbf{AUROC $\uparrow$}    & \textbf{Avg.~Precision $\uparrow$} & \textbf{Accuracy $\uparrow$} & \textbf{F1 score $\uparrow$} \\\midrule
    GCN                        & 0.7606 $\pm$ 0.0097          & 0.2020 $\pm$ 0.0024          & 0.6839 $\pm$ 0.0084          & 0.1507 $\pm$ 0.0018 \\
    GCN+virtual node           & 0.7599 $\pm$ 0.0119          & 0.2424 $\pm$ 0.0034          & 0.6857 $\pm$ 0.0061          & 0.1595 $\pm$ 0.0018 \\
    GIN                        & 0.7558 $\pm$ 0.0140          & 0.2266 $\pm$ 0.0028          & 0.6892 $\pm$ 0.0100          & 0.1495 $\pm$ 0.0023 \\
    GIN+virtual node           & 0.7707 $\pm$ 0.0149          & 0.2703 $\pm$ 0.0023          & 0.7037 $\pm$ 0.0107          & 0.1581 $\pm$ 0.0026 \\
    \midrule
    SAN                        & 0.7785 $\pm$ 0.2470          & 0.2765 $\pm$ 0.0042          & --                           & -- \\
    GraphTrans (GCN-Virtual)   & --                           & 0.2761 $\pm$ 0.0029          & --                           & 0.1830 $\pm$ 0.0024 \\
    K-Subtree SAT              & --                           & --                           & 0.7522 $\pm$ 0.0056          & 0.1937 $\pm$ 0.0028 \\
    \method                    & 0.7880 $\pm$ 0.0101          & 0.2907 $\pm$ 0.0028          & 0.8015 $\pm$ 0.0033          & 0.1894 $\pm$ 0.0024 \\ \midrule
    MPNN + VN + NoPE           & 0.7676 $\pm$ 0.0172          & 0.2823 $\pm$ 0.0026          & 0.8055 $\pm$ 0.0038          & 0.1727 $\pm$ 0.0017             & \\
    MPNN + VN + PE             & 0.7687 $\pm$ 0.0136          & 0.2848 $\pm$ 0.0026          & 0.8027 $\pm$ 0.0026          & 0.1719 $\pm$ 0.0013             & \\
    \bottomrule
    \end{tabular}
\end{table*}

The use of $(\mv,\delta)$ separability is to approximate hard selection function arbitrarily well, which is stated below and proved in \Cref{subsec:assumptions}. 
\begin{restatable}[approximate hard selection]{lemma}{UniformSelection}
\label{lemma-uniform-selection}
Given $\mathcal{X}$ is $(\mv,\delta)$ separable by $\bar{\alpha} $ for some fixed $\mv\in \mb{R}^{n\times d}$, $\bar{\alpha} \in \mc{A}$ and $\delta > 0$, the following holds. For any $\epsilon>0$ and $i\in [n]$, there exists a set of attention weights $\mW_{i, Q}, \mW_{i, K}$ in $i$-th layer of MPNN + VN such that $\alpha_{\vn}(\vx_i, \vv_i ) > 1-\epsilon$ for any $\vx_i \in \mathcal{X}_i$. In other words, we can approximate a hard selection function $f_i(\vx_1, ..., \vx_n) = \vx_i$ arbitrarily well on $\mathcal{X}$ by setting $\alpha_{\vn} = \bar{\alpha}$.  
\end{restatable}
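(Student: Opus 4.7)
The plan is to amplify the positive margin $\delta$ guaranteed by $(\mv,\delta)$-separability into a post-softmax attention weight arbitrarily close to $1$ by rescaling the query weight matrix, then upgrade the resulting bound to hold uniformly over $\mathcal{X}$ via compactness. Fix $i \in [n]$ and $\epsilon > 0$.

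First, I would unpack the separability hypothesis in logit space. By Definition~\ref{vdeltaseparable} applied to $\bar\alpha$, there exist fixed weights $\mW_Q^{\bar\alpha}, \mW_K^{\bar\alpha}$ such that for every $\mX \in \mathcal{X}$, the pre-softmax scores $s_k(\mX) := \vx_k^{\!T}\mW_Q^{\bar\alpha}(\mW_K^{\bar\alpha})^{\!T}\vv_i$ satisfy $s_i(\mX) > \max_{j\neq i} s_j(\mX)$: since softmax is strictly order-preserving, the $\delta$-gap in the attention values forces a strict gap in the logits. Using AS\ref{AS-2}, $\mathcal{X}$ is compact and $\mX \mapsto s_i(\mX) - \max_{j\neq i} s_j(\mX)$ is continuous and strictly positive, hence attains a uniform positive lower bound $\delta^\star > 0$.

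Next, I would set the attention weights of the $i$-th VN layer to $\mW_{i,Q} = c\,\mW_Q^{\bar\alpha}$ and $\mW_{i,K} = \mW_K^{\bar\alpha}$ for a scalar $c>0$ to be chosen. The pre-softmax scores then scale to $c\,s_k(\mX)$, so
\[
\alpha_{\vn}(\vx_i,\vv_i) \;=\; \frac{e^{c\,s_i(\mX)}}{\sum_{k=1}^n e^{c\,s_k(\mX)}} \;\geq\; \frac{1}{1+(n-1)\,e^{-c\,\delta^\star}}.
\]
Choosing $c \geq \tfrac{1}{\delta^\star}\log\!\bigl(\tfrac{(n-1)(1-\epsilon)}{\epsilon}\bigr)$ immediately yields $\alpha_{\vn}(\vx_i,\vv_i) > 1-\epsilon$, uniformly over $\vx_i \in \mathcal{X}_i$.

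The main obstacle is that a \emph{single} pair $(\mW_{i,Q},\mW_{i,K})$ must work for every input simultaneously rather than pointwise. That is exactly what the compactness-plus-continuity step handles: it promotes the pointwise positive logit gap coming from separability to the uniform lower bound $\delta^\star$. Once this uniform margin is secured, the remainder of the argument is the standard large-temperature concentration of softmax, and the entire construction is realised by a single attention head of the VN with no additional architectural ingredients.
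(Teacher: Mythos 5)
Your proof takes essentially the same route as the paper's: amplify the margin by rescaling the query matrix and then apply the standard softmax concentration bound $\frac{1}{1+(n-1)e^{-c\,\text{gap}}}$. The paper's argument simply multiplies the weight matrix in $\bar\alpha$ by $c$ and asserts that the unnormalized scores now satisfy a $c\delta$-gap, and then applies the same concentration bound.

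The one place you do something genuinely different, and arguably more carefully, is the passage from the definition of $(\mv,\delta)$-separability to the logit gap. As written, the definition states the $\delta$-gap on the softmax-normalized $\bar\alpha$, so scaling the weight matrix by $c$ does \emph{not} directly give a $c\delta$-gap on the unnormalized scores (softmax preserves order, not gap magnitude). You patch this cleanly: observe that the logit gap $s_i(\mX) - \max_{j\neq i} s_j(\mX)$ is strictly positive for every $\mX$, continuous in $\mX$, and hence bounded below by some $\delta^\star>0$ on the compact set $\mathcal X$ (AS\ref{AS-2}). Scaling then yields a $c\delta^\star$ logit gap, and the rest is identical. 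The paper's proof elides this step; your version is a correct completion of the same argument, at the cost of trading the explicit constant $\delta$ for an existentially quantified $\delta^\star$. Both versions achieve uniformity over $\mathcal X_i$ with a single weight choice per index $i$, which is what the lemma actually requires.
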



With the notation set up, We now state an extra assumption needed for deep MPNN + VN case and the main theorem. 
\begin{assumption}\label{AS-1} 
$\mathcal{X}$ is $(\mv,\delta)$ separable by $\bar{\alpha}$ for some fixed $\mv \in \mb{R}^{n\times d}$, $\bar{\alpha}\in \mc{A}$ and $\delta>0$. 
\end{assumption}

\begin{restatable}{theorem}{mainthm}
\label{thm:constant-width}
Assume AS 1-3 hold for the compact set $\mc{X}$ and $\mL$. Given any graph $G$ of size $n$ with node features $\mX\in \mc{X}$, and a self-attention layer $\mL$ on $G$ (fix $\mW_K, \mW_Q, \mW_V$ in $\alpha$), there exists a $\mc{O}(n)$ layer of heterogeneous MPNN + VN with the specific aggregate/update/message function that can approximate $\mL$ on $\mc{X}$ arbitrarily well. 
\end{restatable}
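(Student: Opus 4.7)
The plan is to organize the $\mathcal{O}(n)$ layers into $n$ ``rounds'', each consisting of a constant number of simplified heterogeneous MPNN + VN layers. In round $k \in [n]$, the virtual node will approximately pick out node $k$, broadcast its feature back to every graph node, and let each graph node $i$ incrementally accumulate its contribution to the attention numerator and denominator determined by the pair $(i,k)$. After $n$ rounds, each graph node $i$ holds (approximations of) $N_i := \sum_{k=1}^n e^{\alpha'(\vx_i,\vx_k)} \mW_V \vx_k$ and $D_i := \sum_{k=1}^n e^{\alpha'(\vx_i,\vx_k)}$, and a final constant-depth block divides them to produce $\mL(\mX)_i$.

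More concretely, I would augment each graph node's feature with two accumulator slots $(\vn_i,d_i)\in \mathbb{R}^d\times \mathbb{R}$ initialized to $(\vzero,0)$; width stays $\mathcal{O}(1)$ since $d$ is fixed. In round $k$, I hard-code the attention weights $\mW_{k,Q},\mW_{k,K}$ of $\alpha_{\vn}$ given by AS\ref{AS-1} and Lemma~\ref{lemma-uniform-selection} so that, with any prescribed precision, the VN-aggregation approximates the hard selector $(\vx_1,\ldots,\vx_n)\mapsto \vx_k$; a constant-depth MLP inside $\gamma_{\vn}^{(k)}$ turns this into an arbitrarily good approximation of $\vx_k$ stored on the VN. Then $\gamma_{\gn}^{(k)}$ reads the VN feature (which acts as $\mathcal{N}_1(i)$ in Def.~\ref{def:simplified-hetero-mpnn-vn}) and updates
\begin{equation*}
\vn_i \leftarrow \vn_i + e^{\alpha'(\vx_i,\vx_k)}\mW_V \vx_k, \qquad d_i \leftarrow d_i + e^{\alpha'(\vx_i,\vx_k)},
\end{equation*}
which is a continuous function of $(\vx_i,\vx_k,\vn_i,d_i)$ and can be approximated to any tolerance by a constant-size MLP on the compact domain guaranteed by AS\ref{AS-2} and AS\ref{AS-3}. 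After round $n$, one more constant-depth block replaces the node feature by $\vn_i / d_i$, which is well-defined and Lipschitz because AS\ref{AS-3} gives a uniform lower bound on $d_i$.

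The main obstacle is error control across the $n$ rounds. Two independent sources of error appear in each round: (i) the approximate-hard-selection error $\epsilon_{\text{sel}}$ from Lemma~\ref{lemma-uniform-selection}, which causes the VN to emit $\vx_k + \eta_k$ instead of $\vx_k$ with $\|\eta_k\|$ controllable; and (ii) the MLP approximation error $\epsilon_{\text{MLP}}$ in computing the exponential update of $(\vn_i,d_i)$. I would argue by uniform continuity on the compact set $\mathcal{X}$ (and the compact set in which the accumulators $(\vn_i,d_i)$ provably stay, using AS\ref{AS-2}, AS\ref{AS-3}) that each single-round update is $L$-Lipschitz in its inputs for some $L$ depending only on the constants $C_1,C_2,\delta,n$. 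Composing $n$ such approximately-correct updates multiplies errors by at most $L^n$, so given a target end-to-end tolerance $\epsilon>0$ I pick $\epsilon_{\text{sel}},\epsilon_{\text{MLP}} \le \epsilon / (n L^n \cdot \text{const})$; these tolerances are achievable with $\mathcal{O}(1)$-width MLPs by the universal approximation theorem, and with $\mathcal{O}(1)$-width attention via Lemma~\ref{lemma-uniform-selection}.

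Finally, the division step $\vn_i/d_i$ is uniformly continuous on the compact region containing $(\vn_i,d_i)$ after $n$ rounds (again using the uniform lower bound on $D_i$ from AS\ref{AS-3}), so an additional constant-depth MLP approximates it to any desired precision. Summing all contributions gives a total error at most $\epsilon$ uniformly over $\mX\in\mathcal{X}$. The total depth is $n$ rounds of constant depth plus a final constant-depth read-out block, i.e.\ $\mathcal{O}(n)$, and the width never exceeds a constant depending only on $d$, $\alpha$, and the approximation tolerance. The delicate point, and the reason AS\ref{AS-1} is needed, is that without a $\delta$-separable selector the VN cannot reliably isolate individual nodes with $\mathcal{O}(1)$ width, which is precisely what forces the strong assumption highlighted in the theorem.
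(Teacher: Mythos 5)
Your proposal follows essentially the same strategy as the paper's proof: VN uses attention-based hard selection (via Lemma~\ref{lemma-uniform-selection} and AS\ref{AS-1}) to isolate $\vx_k$ in round $k$, broadcasts it so each graph node accumulates numerator and denominator of the softmax in auxiliary slots, and a final constant-depth block performs the division, with error control via uniform continuity on the compact sets guaranteed by AS\ref{AS-2}--AS\ref{AS-3}. One minor note: your $L^n$ error-compounding bound is more pessimistic than necessary, since each round's update merely \emph{adds} a new term to $(\vn_i,d_i)$ (Lipschitz constant $1$ in the accumulators), so the per-round errors in fact sum rather than compound multiplicatively; the paper's meta-lemma (Lemma~\ref{lemma:approximation-meta-lemma}) handles this additively.
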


The proof is presented in the \Cref{sec:approximate-full-self-attention}. On the high level, we can design an MPNN + VN where the $i$-th layer will select $\tilde{\vx}_i$, an approximation of $\vx_i$ via attention mechanism, enabled by \Cref{lemma-uniform-selection}, and send $\tilde{\vx}_i$ to the virtual node. Virtual node will then pass the $\tilde{\vx}_i$ to all graph nodes and computes the approximation of $e^{\alpha(\vx_i, \vx_j)}, \forall j\in [n]$. Repeat such procedures $n$ times for all graph nodes, and finally, use the last layer for attention normalization. A slight relaxation of AS\ref{AS-1} is also provided in the appendix. 



\begin{table*}[th!]
    \caption{Evaluation on PCQM4Mv2~\cite{hu2021ogb} dataset.
    For \method evaluation, we treated the \emph{validation} set of the dataset as a test set, since the \emph{test-dev} set labels are private.
    }
    \label{tab:results_pcqm4m}
    \centering
    \fontsize{8.0pt}{8.0pt}\selectfont
    \begin{tabular}{lccccc}\toprule
    \label{tab:pcqm}
    \multirow{2}{*}{\textbf{Model}} &\multicolumn{3}{c}{\textbf{PCQM4Mv2}} \\\cmidrule{2-5}
    &\textbf{Test-dev MAE $\downarrow$} &\textbf{Validation MAE $\downarrow$} &\textbf{Training MAE} &\textbf{\# Param.} \\\midrule
    GCN &0.1398 &0.1379 & n/a &2.0M \\
    GCN-virtual &0.1152 &0.1153 & n/a &4.9M \\
    GIN &0.1218 &0.1195 & n/a &3.8M \\
    GIN-virtual &0.1084 &0.1083 & n/a &6.7M \\\midrule
    GRPE~\citep{park2022grpe} &0.0898 &0.0890 & n/a &46.2M \\
    EGT~\citep{hussain2022global} &0.0872 &0.0869 & n/a &89.3M \\
    Graphormer~\citep{shi2022benchmarking} &n/a &0.0864 &0.0348 & 48.3M \\
    \method-small &n/a &0.0938 &0.0653 &6.2M \\
    \method-medium &n/a &0.0858 &0.0726  &19.4M \\ \midrule
    MPNN + VN + PE (small)  &n/a &0.0942 &0.0617 &5.2M \\
    MPNN + VN + PE (medium)  &n/a &0.0867 &0.0703 &16.4M \\
    MPNN + VN + NoPE (small)  &n/a &0.0967 &0.0576 &5.2M \\
    MPNN + VN + NoPE (medium)  &n/a &0.0889 &0.0693 &16.4M \\
    \bottomrule
    \end{tabular}
\end{table*}

\section{Experiments}
We benchmark MPNN + VN for three tasks, long range interaction modeling in \Cref{subsec:lrgb}, OGB regression tasks in \Cref{subsec:ogb}, and focasting sea surface temperature in \Cref{sec:climate}. The code is available \url{https://github.com/Chen-Cai-OSU/MPNN-GT-Connection}. 

\subsection{MPNN + VN for LRGB Datasets}
\label{subsec:lrgb}
We experiment with MPNN + VN for Long Range Graph Benchmark (LRGB) datasets. 
Original paper \citep{dwivedi2022long} observes that GT outperforms MPNN on 4 out of 5 datasets, among which GT shows significant improvement over MPNN on \pepfunc and \pepstruct for all MPNNs. To test the effectiveness of the virtual node, we take the original code and modify the graph topology by adding a virtual node and keeping the hyperparameters of all models unchanged. 

Results are in \Cref{tab:experiments_peptides}. Interestingly, such a simple change can boost MPNN + VN by a large margin on \pepfunc and \pepstruct. Notably, with the addition of VN, GatedGCN + RWSE (random-walk structural encoding) after augmented by VN {\bf outperforms all transformers} on \pepfunc, and GCN outperforms transformers on \pepstruct. 

\subsection{Stronger MPNN + VN Implementation}
\label{subsec:ogb}
Next, by leveraging the modularized implementation from GraphGPS \citep{rampavsek2022recipe}, we implemented a version of MPNN + VN with/without extra positional embedding. Our goal is not to achieve SOTA but instead to push the limit of MPNN + VN and better understand the source of the performance gain for GT. 
In particular, we replace the GlobalAttention Module in GraphGPS with \DS{}, which is equivalent to one specific version of MPNN + VN. We tested this specific version of MPNN + VN on 4 OGB datasets, both with and without the use of positional embedding. 
The results are reported in Table \ref{tab:results_ogb}. Interestingly, even without the extra position embedding, our MPNN + VN is able to further improve over the previous GCN + VN \& GIN + VN implementation. 
The improvement on \textbf{ogbg-ppa} is particularly impressive, which is from 0.7037 to 0.8055. 
Furthermore, it is important to note that while MPNN + VN does not necessarily outperform GraphGPS, which is a state-of-the-art architecture using both MPNN, Position/structure encoding and Transformer, the difference is quite small -- this however, is achieved by a simple MPNN + VN architecture. 

We also test MPNN + VN on large-scale molecule datasets PCQMv2, which has 529,434 molecule graphs. 
We followed \citep{rampavsek2022recipe}
and used the original validation set as the test set, while we left out random 150K molecules for our validation set. 
As we can see from \Cref{tab:pcqm}, MPNN + VN + NoPE performs significantly better than the early MPNN + VN implementation: GIN + VN and GCN + VN. 
The performance gap between GPS on the other hand is rather small: 0.0938 (GPS) vs. 0.0942 (MPNN + VN + PE) for the small model and 0.0858 (GPS) vs. 0.0867 (MPNN + VN + PE) for the medium model. 

\subsection{Forecasting Sea Surface Temperature} \label{sec:climate}

In this experiment, we apply our MPNN + VN model to forecast sea surface temperature (SST). We are particularly interested in the empirical comparison between MPNN + VN and Linear Transformer \citep{katharopoulos-et-al-2020} as according to \Cref{sec:shallow-narrow-attention}, MPNN + VN theoretically can approximate Linear Transformer.

In particular, from the DOISST data proposed by \citep{ImprovementsoftheDailyOptimumInterpolationSeaSurfaceTemperatureDOISSTVersion21}, we construct a dataset of daily SST in the Pacific Ocean from 1982 to 2021, in the region of longitudes from $180.125^\circ\text{E}$ to $269.875^\circ\text{E}$ and latitudes from $-14.875^\circ\text{N}$ to $14.875^\circ\text{N}$. Following the procedure from \citep{de2018deep,deBezenac2019} and \citet{wang2022metalearning}, we divide the region into 11 batches of equal size with 30 longitudes and 30 latitudes at 0.5$^\circ$-degree resolution, that can be represented as a graph of 900 nodes. 
The tasks are to predict the next 4 weeks, 2 weeks and 1 week of SST at each location, given 6 weeks of historical data. We train on data from years 1982--2018, validate on data from 2019 and test on data from 2020--2021. The number of training, validation, and testing examples are roughly 150K, 3K, and 7K. See details of dataset construction, model architectures, and training scheme in \Cref{appendix:climate}. 

We compare our model to other baselines including TF-Net \cite{Rui2020}, a SOTA method for spatiotemporal forecasting, Linear Transformer \citep{katharopoulos-et-al-2020,wang2020linformer} with Laplacian positional encoding (LapPE), and Multilayer Perceptron (MLP). We use Mean Square Error (MSE) as the metric and report the errors on the test set, shown in the \Cref{tab:climate}. 
We observe that the virtual node (VN) alone improves upon MPNN by $3.8\%$, $6.6\%$ and $4.5\%$ in 4-, 2- and 1-week settings, respectively. Furthermore, aligned with our theory in \Cref{sec:shallow-narrow-attention}, MPNN + VN indeed achieves comparable results with Linear Transformer and outperforms it by a margin of $0.4\%$, $2.8\%$ and $4.3\%$ in 4-, 2- and 1-week settings, respectively.


\section{Concluding Remarks}
In this paper, we study the expressive power of MPNN + VN under the lens of GT. If we target the self-attention layer in Performer and Linear Transformer, one only needs $\mc{O}(1)$-depth $\mc{O}(1)$ width for arbitrary approximation error. 
For self-attention in full transformer, we prove that heterogeneous MPNN + VN of either $\mc{O}(1)$ depth $\mc{O}(n^d)$ width or $\mc{O}(n)$ depth $\mc{O}(1)$ width (under some assumptions) can approximate 1 self-attention layer arbitrarily well. 
Compared to early results \citep{kim2022pure} showing GT can approximate MPNN, our theoretical result draws the connection from the inverse direction. 


\begin{table}
\label{tab:sst}
\caption{\label{tab:climate} Results of SST prediction.} 
\begin{center}
\scalebox{0.9}{
\begin{tabular}{lccc}
\midrule
\textbf{Model} & \textbf{4 weeks} & \textbf{2 weeks} & \textbf{1 week} \\
\midrule
MLP & 0.3302 & 0.2710 & 0.2121 \\
TF-Net  & 0.2833 & \textbf{0.2036} & \textbf{0.1462} \\
Linear Transformer + LapPE & 0.2818 & 0.2191 & 0.1610 \\
MPNN & 0.2917 & 0.2281 & 0.1613 \\
\midrule
MPNN + VN & \textbf{0.2806} & 0.2130 & 0.1540 \\
\bottomrule
\end{tabular}
}
\end{center}
\end{table}
On the empirical side, we demonstrate that MPNN + VN remains a surprisingly strong baseline. Despite recent efforts, we still lack good benchmark datasets where GT can outperform MPNN by a large margin. Understanding the inductive bias of MPNN and GT remains challenging. For example, can we mathematically characterize tasks that require effective long-range interaction modeling, and provide a theoretical justification for using GT over MPNN (or vice versa) for certain classes of functions on the space of graphs? We believe making processes towards answering such questions is an important future direction for the graph learning community.    

\section*{Acknowledgement}
This work was supported in part by the U.S. Department Of Energy, Office of Science, U. S. Army Research Office under Grant W911NF-20-1-0334, Google Faculty Award, Amazon Research Award, a Qualcomm gift fund, and NSF Grants \#2134274, \#2107256, \#2134178, CCF-2217033, and CCF-2112665.

\bibliography{./main}
\bibliographystyle{icml2023}

\newpage
\onecolumn
\appendix
\section{Notations} 
We provide a notation table for references. 
\begin{table}[!htbp]
\caption{Summary of important notations.} 
\begin{center}
\scalebox{0.9}{
{
\begin{tabular}{@{}l|l@{}}
    \hline
    \toprule
    Symbol & Meaning \\
    \midrule
    \midrule
    $\mX \in \mc{X} \subset \R{n \times d}$ & graph node features \\
    $\vx_i \in \R{1\times d}$ & graph node $i$'s feature \\
    $\tilde{\vx}_i \in \R{1\times d}$ & approximated graph node $i$'s feature via attention selection \\
    $\mc{M}$ & A multiset of vectors in $\R{d}$  \\
    $\mW_Q^{(l)}, \mW_K^{(l)}, \mW_V^{(l)} \in \R{d\times d'}$ & attention matrix of $l$-th self-attention layer in graph transformer \\
    $\mc{X}$ & feature space \\ 
    $\mc{X}_i$ & projection of feature space onto $i$-th coordinate \\
    $\mL^{\ds}_i$ & $i$-th linear permutation equivariant layer in \DS{} \\ 
    $\mL, \mL'$ & full self attention layer; approximate self attention layer in Performer \\  
    $\vz_{\vn}^{(l)}, \vz_{i}^{(l)}$ &  virtual/graph node feature at layer $l$ of heterogeneous MPNN + VN  \\
    $\alpha_{\vn}$ & attention score in MPNN + VN \\
    
    $\alpha(\cdot, \cdot)$ & normalized attention score \\
    $\alpha_{\gat}(\cdot, \cdot)$ & normalized attention score with \gat{} \\
    $\alpha'(\cdot, \cdot)$ & unnormalized attention score. $\alpha'(\vu, \vv) = \vu \mW_Q(\mW_K)^T\vv^T$ \\
    $\alpha'_{\gat}(\cdot, \cdot)$ & unnormalized attention score with \gat{}. $\alpha'_{\text{GATv2}}(\vu, \vv) :=  \va^T \operatorname{LeakyReLU}\left(\mW \cdot\left[\vu \| \vv\right] + \vb \right)$ \\
    $\mc{A}$ & space of attentions, where each element $\alpha \in \mc{A}$ is of form $\alpha(\vu, \vv) = \text{softmax}(\vu \mW_Q(\mW_K)^T\vv^T)$ \\
    $C_1$ & upper bound on norm of all node features $\norm{\vx_i}$ \\
    $C_2$ & upper bound on the norm of $\mW_Q, \mW_K, \mW_V$ in target $\mL$ \\
    $C_3$ & upper bound on the norm of attention weights of $\alpha_{\vn}$ when selecting $\vx_i$ \\
    \midrule
    $\gamma^{(k)}(\cdot, \cdot)$ & update function \\
    $\theta^{(k)}(\cdot, \cdot)$ & message function \\
    $\pool(\cdot)$ & aggregation function \\
    \bottomrule
\end{tabular}
}
}
\end{center}
\label{table:symbol_notation}
\end{table}


\section{$\mc{O}(n)$ Heterogeneous MPNN + VN Layer with $O(1)$ Width Can Approximate $1$ Self Attention Layer Arbitrarily Well}
\label{sec:approximate-full-self-attention}

\subsection{Assumptions}
\label{subsec:assumptions}

A special case of $(\mv,\delta)$ separable is when $\delta=0$, i.e., $\forall i, \bar{\alpha}(\vx_i, \vv_i) > \max_{j\neq i} \bar{\alpha}(\vx_j, \vv_i)$. We provide a geometric characterization of $\mX$ being $(\mv, 0)$ separable. 
\begin{restatable}{lemma}{Geometriccharacterization}
\label{lemma:Geometric-characterization} Given $\bar{\alpha}$ and $\mV$, $\mX$ is $(\mv, 0)$ separable by $\bar{\alpha}$  $\Longleftrightarrow$ $\vx_i$ is not in the convex hull spanned by $\{\vx_j\}_{j\neq i}$. $\Longleftrightarrow$ there are no points in the convex hull of $\{\vx_i\}_{i\in[n]}$.
\end{restatable}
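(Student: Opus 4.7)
The plan is to reduce $(\mv,0)$-separability to a classical linear-separation problem on $\R{d}$ and then invoke a Gordan-type theorem.

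First I would unfold the definition. Since softmax is strictly monotone when applied row-wise against a fixed key $\vv_i$, the condition $\bar{\alpha}(\vx_i,\vv_i) > \max_{j\neq i}\bar{\alpha}(\vx_j,\vv_i)$ is equivalent to the analogous strict inequality on the unnormalized logits $\vx_i^T \mM \vv_i > \vx_j^T \mM \vv_i$ for all $j\neq i$, where $\mM := \mW^{\bar{\alpha}}_Q (\mW^{\bar{\alpha}}_K)^T$. Because $\mW^{\bar{\alpha}}_Q$ and $\mW^{\bar{\alpha}}_K$ are free parameters and we may assume $\vv_i\neq 0$ (otherwise the inequality is vacuously unsatisfiable for any weights), the vector $\vw_i := \mM \vv_i$ can be made equal to any prescribed element of $\R{d}$; the rank-one choice $\mM = \vw_i \vv_i^T / \|\vv_i\|^2$ realizes it. Thus $(\mv,0)$-separability of $\mX$ by $\bar{\alpha}$ is equivalent to the purely geometric statement: for every $i\in [n]$, there exists $\vw_i\in \R{d}$ such that $\vw_i^T(\vx_i-\vx_j) > 0$ for every $j\neq i$.

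Next I would apply a standard convex separation result. The feasibility of the strict homogeneous system $\{\vw^T(\vx_i - \vx_j) > 0 : j\neq i\}$ is equivalent, by Gordan's theorem (or equivalently the separating hyperplane theorem applied to the origin and the relevant convex hull), to $0 \notin \mathrm{conv}\{\vx_i - \vx_j : j\neq i\}$. An elementary rearrangement of a putative convex combination witnessing $0$ in this hull then shows that this is in turn equivalent to $\vx_i \in \mathrm{conv}\{\vx_j\}_{j\neq i}$; the degenerate case of zero total weight is ruled out by the nonnegativity and non-triviality of the coefficients. Negating this gives the first claimed equivalence.

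The second equivalence is a purely set-theoretic rephrasing: quantifying ``$\vx_i \notin \mathrm{conv}\{\vx_j\}_{j\neq i}$'' over all $i \in [n]$ is precisely the statement that no point of $\{\vx_i\}_{i\in[n]}$ lies in the convex hull of the remaining points, equivalently that every $\vx_i$ is an extreme point (a vertex) of $\mathrm{conv}\{\vx_i\}_{i\in[n]}$. The only delicate step in the whole argument is the first reduction --- verifying that the bilinear form $(\vu,\vv)\mapsto \vu^T \mM \vv$ is rich enough that the effective linear functional $\vw_i = \mM \vv_i$ sweeps out all of $\R{d}$ --- but this is immediate from the rank-one construction above. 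Everything afterwards is standard convex geometry, so I do not anticipate any genuine obstacle.
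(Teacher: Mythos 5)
Your proof is correct and follows essentially the same route as the paper's: strip softmax by strict monotonicity, observe that the effective functional $\mW_Q^{\bar\alpha}(\mW_K^{\bar\alpha})^T\vv_i$ ranges over all of $\R{d}$, and reduce to linear separation of $\vx_i$ from $\{\vx_j\}_{j\neq i}$, which is classically equivalent to $\vx_i\notin\mathrm{conv}\{\vx_j\}_{j\neq i}$. You are somewhat more explicit than the paper on two minor but real points that the paper leaves implicit: the rank-one construction showing $\vw_i=\mM\vv_i$ can be made arbitrary, and the degenerate case $\vv_i=0$ (under which the strict inequality is unsatisfiable and the equivalence would fail unless one assumes, as the paper tacitly does, that each $\vv_i\neq 0$).
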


\begin{proof}
The second equivalence is trivial so we only prove the first equivalence. By definition, $\mX$ is $(\mv, 0)$ separable by $\bar{\alpha}$ $\Longleftrightarrow$ $\bar{\alpha}(\vx_i, \vv_i) > \max_{j\neq i} \bar{\alpha}(\vx_j, \vv_i) \forall i \in [n]$ $\Longleftrightarrow$ $\ip{\vx_i, \mW_Q^{\bar{\alpha}} \mW_K^{{\bar{\alpha}, T}} \vv_i} > \max_{j\neq i} \ip{\vx_j, \mW_Q^{\bar{\alpha}} \mW_K^{{\bar{\alpha}}, T} \vv_i} \forall i \in [n]$.

By denoting the $\vv'_i := \mW_Q^{\bar{\alpha}}\mW_K^{{\bar{\alpha}}, T} \vv_i \in \mb{R}^{d}$, we know that $\ip{\vx_i, \vv'_i} > \max_{j\neq i} \ip{\vx_j, \vv'_i} \forall i \in [n]$, which implies that  
$\forall i \in [n], \vx_i$ can be linearly seprated from $\{\vx_j\}_{j\neq i}$ $\Longleftrightarrow$ $\vx_i$ is not in the convex hull spanned by $\{\vx_j\}_{j\neq i}$, which concludes the proof. 
\end{proof}

\UniformSelection*
\begin{proof}
Denote $\bar{\alpha}'$ as the unnormalized $\bar{\alpha}$. As $\mathcal{X}$ is $(\mv,\delta)$ separable by $\bar{\alpha}$, by definition we know that $\bar{\alpha}(\vx_i, \vv_i) > \max_{j\neq i }\bar{\alpha}(\vx_j, \vv_i) + \delta$ holds for any $i\in [n]$ and $\vx_i \in \mc{M}$. We can amplify this by multiple the weight matrix in $\bar{\alpha}$ by a constant factor $c$ to make $\bar{\alpha}'(\vx_i, \vv_i)> \max_{j\neq i}\bar{\alpha}'(\vx_j, \vv_i) + c\delta$. This implies that $e^{\bar{\alpha}'(\vx_i, \vv_i)}> e^ {c\delta} \max_{j\neq i}e^{\bar{\alpha}'(\vx_j, \vv_i)}$. This means after softmax, the attention score $\bar{\alpha}(\vx_i, \vv_i)$ will be at least $\frac{e^{c \delta}}{e^{c \delta}+n-1}$. We can pick a large enough $c(\delta, \epsilon)$ such that $\bar{\alpha}(\vx_i, \vv_i) > 1-\epsilon$ for any $\vx_i \in \mathcal{X}_i$ and $\epsilon > 0$.
\end{proof}

\textbf{Proof Intuition and Outline.} On the high level, $i$-th MPNN + VN layer will select $\tilde{\vx}_i$, an approximation $i$-th node feature $\vx_i$ via attention mechanism, enabled by \Cref{lemma-uniform-selection}, and send $\tilde{\vx}_i$ to the virtual node. Virtual node will then pass the $\tilde{\vx}_i$ to all graph nodes and computes the approximation of $e^{\alpha(\vx_i, \vx_j)}, \forall j\in [n]$. Repeat such procedures $n$ times for all graph nodes, and finally, use the last layer for attention normalization. 

The main challenge of the proof is to 1) come up with message/update/aggregation functions for heterogeneous MPNN + VN layer, which is shown in \Cref{subsec-mpnn-form},
and 2) ensure the approximation error, both from approximating Aggregate/Message/Update function with MLP and the noisy input, can be well controlled, which is proved in \Cref{subsec-controling-error}.

We will first instantiate the Aggregate/Message/Update function for virtual/graph nodes in \Cref{subsec-mpnn-form}, and prove that each component can be either exactly computed or approximated to an arbitrary degree by MLP. Then we go through an example in \Cref{subsec-a-running-example} of approximate self-attention layer $\mL$ with $\mc{O}(n)$ MPNN + VN layers. The main proof is presented in \Cref{subsec-controling-error}, where we show that the approximation error introduced during different steps is well controlled. Lastly, in \Cref{subsec:relax-assumption} we show assumption on node features can be relaxed if a more powerful attention mechanism \gatii \citep{brody2021attentive} is allowed in MPNN + VN.  

\subsection{Aggregate/Message/Update
Functions}
\label{subsec-mpnn-form}

Let $\mc{M}$ be a multiset of vectors in $\R{d}$. 
The specific form of Aggregate/Message/Update for virtual and graph nodes are listed below. Note that ideal forms will be implemented as MLP, which will incur an approximation error that can be controlled to an arbitrary degree. We use $\vz_{\vn}^{(k)}$ denotes the virtual node's feature at $l$-th layer, and $\vz_{i}^{(k)}$ denotes the graph node $i$'s node feature. Iteration index $k$ starts with 0 and the node index starts with 1. 

\subsubsection{virtual node}\label{subsubsec-vn}
At $k$-th iteration, virtual node $i$'s feature $\vz_i^{(k)}$ is a concatenation of three component $[\tilde{\vx}_i, \vv_{k+1}, 0]$ where the first component is the approximately selected node features $\vx_i\in \R{d}$, the second component is the $\vv_i\in \R{d}$ that is used to select the node feature in $i$-th iteration. The last component is just a placeholder to ensure the dimension of the virtual node and graph node are the same. It is introduced to simplify notation.  


\emph{Initial feature} is $\vz_{\vn}^{(0)} = [\bm{0}_d, \vv_1, 0]$. 

\emph{Message function + Aggregation function} $\pool_{j \in [n]} \phi_{\vngn}^{(k)}: \mb{R}^{2d+1} \times \mathcal{M} \rightarrow \mb{R}^{2d+1}$ has two cases to discuss depending on value of $k$. For $k=1, 2, ..., n$,
\begin{equation}\label{equ:vn-gn}
\begin{split}
 & \pool_{j \in [n]} \phi_{\vngn}^{(k)}(\vz_{\vn}^{(k-1)}, \{\vz_{i}^{(k-1)}\}_i) = \\
  & \begin{cases} 
  \sum_i \alpha_{\vn}(\vz_{\vn}^{(k-1)}, \vz_i^{(k-1)})\vz_i^{(k-1)} & k = 1, 2, ..., n \\
  \bm{1}_{2d+1} & k = n+1, n+2 \\ 
  \end{cases}
\end{split}
\end{equation}

 where $\vz_{\vn}^{(k-1)} = [\tilde{\vx}_{k-1}, \vv_{k}, 0]$. $\vz_i^{(k-1)} = [\overbrace{\underbrace{\vx_i}_{d \text{ dim}}, ..., ...}^{2d+1 \text{ dim}}]$ is the node $i$'s feature, where the first $d$ coordinates remain fixed for different iteration $k$. 
$\pool_{j \in [n]} \phi_{\vngn}^{(k)}$ use attention $\alpha_{\vn}$ to approximately select $k$-th node feature $[\overbrace{\underbrace{\vx_k}_{d \text{ dim}}, ..., ...}^{2d+1 \text{ dim}}]$. 
Note that the particular form of attention $\alpha_{\vn}$ needed for soft selection is not important as long as we can approximate hard selection arbitrarily well. As the $\vz^{(k-1)}_{\vn}$ contains $\vv_k$ and $\vz_i^{(k-1)}$ contains $\vx_i$ (see definition of graph node feature in \Cref{subsubsec-gn}),  this step can be made as close to hard selection as possible, according to \Cref{lemma-approximation-node-feature}. 

In the case of $k=n+1$,
$\pool_{j \in [n]} \phi_{\vngn}^{(k)}: \underbrace{\mb{R}^{2d+1}}_{\vn} \times \underbrace{\mathcal{M}}_{\text{set of } \gn} \rightarrow \mb{R}^d$ simply returns $\bm{1}_{2d+1}$. This can be exactly implemented by an MLP. 

\emph{Update function $\gamma_{\vn}^{(k)}: \underbrace{\mb{R}^{2d+1}}_{\vn} \times \underbrace{\mb{R}^{2d+1}}_{\gn} \rightarrow \mb{R}^{2d+1}$}:
Given the virtual node's feature in the last iteration, and the selected feature in virtual node $\vy = [\vx_{k}, ..., ...]$ with $\alpha_{\vn}$,
\begin{equation}
\gamma_{\vn}^{(k)}(\cdot, \vy) = 
\begin{cases}
[\vy_{0:d}, \vv_{k+1}, 0] & k=1, ..., n-1 \\
[\vy_{0:d}, \bm{0}_d, 0] & k=n \\
\bm{1}_{2d+1}                         & k=n+1, n+2  \\
\end{cases}
\end{equation}
where $\vy_{0:d}$ denotes the first $d$ channels of $\vy\in \mb{R}^{2d+1}$. $\vy$ denotes the selected node $\vz_i$'s feature in Message/Aggregation function. 
$\gamma_{\vn}^{(k)}$ can be exactly implemented by an MLP for any $k=1, ..., n+2$.

\subsubsection{Graph node}\label{subsubsec-gn}
Graph node $i$'s feature $\vv_i \in \R{2d+1}$ can be thought of as a concatenation of three components $[\underbrace{\vx_i}_{d \text{ dim}}, \underbrace{\tmp{}}_{d \text{ dim}}, \underbrace{\ps{}}_{1 \text{ dim}}]$, where $\vx_i, \in \R{d}, \tmp{} \in \R{d}$ \footnote{\tmp{} technicially denotes the dimension of projected feature by $W_V$ and does not has to be in $\R{d}$. We use $\R{d}$ here to reduce the notation clutter.}, and $\ps{}\in \R{}$. 

In particular, $\vx_i$ is the initial node feature. The first $d$ channel will stay the same until the layer $n+2$. $\tmp{} = \sum_{j \in \text{subset of} [n] } e^{\alpha'_{ij}}\vx_j$ stands for the unnormalized attention contribution up to the current iteration. $\ps{} \in \R{}$ is a partial sum of the unnormalized attention score, which will be used for normalization in the $n+2$-th iteration. 

\emph{Initial feature} $\vz_{\gn}^{(0)} =[\vx_i, \bm{0}_{d}, 0]$.

\emph{Message function + Aggregate function:
$\pool_{j \in [n]} \phi_{\gnvn}^{(k)}: \mb{R}^{2d+1}\times \mb{R}^{2d+1} \rightarrow \mb{R}^{2d+1}$}
is just ``copying the second argument'' since there is just one incoming message from the virtual node, i.e., $\pool_{j \in [n]} \phi_{\gnvn}^{(k)}(\vx, \{\vy\}) = \vy$. This function can be exactly implemented by an MLP. 

\emph{Update function}
$\gamma_{\gn}^{(k)}: \underbrace{\mb{R}^{2d+1}}_{\gn} \times \underbrace{\mb{R}^{2d+1}}_{\vn} \rightarrow \mb{R}^{2d+1}$ is of the following form. 
\begin{equation}\label{eqn-gn-update-function}
\begin{split}
& \gamma_{\gn}^{(k)}([\vx, \tmp{}, \ps{}], \vy) = \\
& \begin{cases}
[\vx, \tmp{}, \ps{}] &k = 1 \\
[\vx, \tmp{} + e^{\alpha'(\vx, \vy_{0:d})}\mW_V \vy_{0:d}, \\ \ps{}+e^{\alpha'(\vx, \vy_{0:d})}] &k = 2, ..., n+1  \\
[\frac{\tmp{}}{\ps{}}, \bm{0}_d, 0] & k=n+2 \\
\end{cases}
\end{split}
\end{equation}
where $\alpha'(\vx, \vy_{0:d})$ is the usual unnormalized attention score. Update function $\gamma_{\gn}^{(k)}$ can be arbitrarily approximated by an MLP, which is proved below. 

\begin{restatable}{lemma}{LemmaApproximateUpdateFunction}
\label{lemma-approximate-update-function}
Update function $\gamma_{\gn}^{(k)}$ can be arbitrarily approximated by an MLP from $\R{2d+1} \times \R{2d+1}$ to $\R{2d+1}$ for all $k=1,..., n+2$.
\end{restatable}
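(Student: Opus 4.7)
The plan is to reduce the claim to a single application of the classical universal approximation theorem for MLPs on a compact set. To do this, I would first argue that for every fixed $k$, the map $\gamma_{\gn}^{(k)}$ is continuous in its arguments on a compact domain, and then invoke the result of \citet{cybenko1989approximation} as was done in the proof of \Cref{thm:constant-depth-constant-width}. The case $k=1$ is the identity, which an MLP implements exactly, so all work concerns $k=2,\ldots,n+1$ and $k=n+2$.

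For $k=2,\ldots,n+1$, the update is built from four continuous primitives: (i) the bilinear form $\alpha'(\vx,\vy_{0:d}) = \vx^T \mW_Q \mW_K^T \vy_{0:d}$, (ii) the scalar exponential $t \mapsto e^t$, (iii) the linear map $\vy_{0:d} \mapsto \mW_V \vy_{0:d}$, and (iv) coordinate-wise addition. I would first verify that the relevant inputs live in a compact set: by AS\ref{AS-2} the initial node features $\vx$ lie in a ball of radius $C_1$, and by AS\ref{AS-3} the matrices $\mW_Q,\mW_K,\mW_V$ are bounded, so $\alpha'$ is bounded on the feature space and $e^{\alpha'}$ is bounded above. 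One then has to check that the \textsf{tmp} and \textsf{partialsum} coordinates also stay in a compact range across iterations; since at each of the at most $n+1$ iterations \textsf{tmp} grows by a bounded vector and \textsf{partialsum} by a bounded scalar, a simple induction bounds both. The composition of these continuous maps is therefore continuous on a compact product domain, and the universal approximation theorem yields an MLP achieving any prescribed uniform error.

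The case $k=n+2$ requires the division $\tmp/\ps$, which is only continuous where the denominator is bounded away from zero. This is the main (minor) obstacle. I would handle it by observing that \textsf{partialsum} is a sum of terms of the form $e^{\alpha'(\vx,\vy_{0:d})}$; since $\alpha'$ is bounded below on the compact domain by AS\ref{AS-3}, each such term is bounded below by a positive constant, and therefore the running sum is bounded below by a strictly positive constant (in particular for the final value used at step $n+2$). Consequently the division is a continuous function on a compact set of inputs with denominators bounded away from $0$, and again universal approximation applies.

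Finally, to produce a single MLP for $\gamma_{\gn}^{(k)}$ rather than one per case, I would either treat $k$ as fixed (since the lemma asserts existence for each $k$), or concatenate the three cases as separate output blocks selected by the layer index, which is just a routine bookkeeping step. The composition of the per-primitive MLP approximations, combined via continuity of composition on the compact domain, yields the desired approximation of $\gamma_{\gn}^{(k)}$ to any tolerance.
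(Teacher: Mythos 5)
Your proposal is correct and follows essentially the same route as the paper: argue continuity of $\gamma_{\gn}^{(k)}$ on a compact domain using AS\ref{AS-2} and AS\ref{AS-3} (boundedness of $\vx$, $\tmp$, $\ps$, $\vy_{0:d}$, and of $\alpha'$), then invoke universal approximation by MLPs. The one place you are slightly more explicit than the paper is the $k=n+2$ division step, where you spell out that $\ps$ is a sum of exponentials bounded below by a positive constant; the paper also notes $\ps$ is bounded below and above but bundles the division case together with the others, so this is a matter of exposition rather than a different argument. Your final remark about producing a single MLP across all $k$ is unnecessary: the lemma asserts an approximating MLP for each fixed $k$, which is how the paper also reads it.
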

\begin{proof}
We will show that for any $k = 1, ..., n+2$, the target function
$\gamma_{\gn}^{(k)}: \mb{R}^{2d+1}\times \mb{R}^{2d+1} \rightarrow \mb{R}^{2d+1}$ is continuous and the domain is compact. By the universality of MLP in approximating continuous function on the compact domain, we know $\gamma_{\gn}^{(k)}$ can be approximated to arbitrary precision by an MLP. 

Recall that
\begin{equation*}
\begin{split}
& \gamma_{\gn}^{(k)}([\vx, \tmp{}, \ps{}], \vy) = \\
& \begin{cases}
[\vx, \tmp{}, \ps{}] &k = 1 \\
[\vx, \tmp{} + e^{\alpha'(\vx, \vy_{0:d})}\mW_V \vy_{0:d}, \\ \ps{}+e^{\alpha'(\vx, \vy_{0:d})}] &k = 2, ..., n+1  \\
[\frac{\tmp{}}{\ps{}}, \bm{0}_d, 0] & k=n+2 \\
\end{cases}
\end{split}
\end{equation*}
it is easy to see that $k=1$, $\gamma_{\gn}^{(1)}$ is continuous. We next show for $k=2, ..., n+2$, $\gamma_{\gn}^{(1)}$ is also continuous and all arguments lie in a compact domain. 

$\gamma_{\gn}^{(k)}$ is continuous because to a) $\alpha'(\vx, \vy)$ is continuous b) scalar-vector multiplication, sum, and exponential are all continuous. Next, we show that four component $\vx, \tmp{}, \ps{}, \vy_{0:d}$ all lies in a compact domain.

$\vx$ is the initial node features, and by AS\ref{AS-2} their norm is bounded so $\vx$ is in a compact domain.

$\tmp{}$ is an approximation of $e^{\alpha'_{i, 1}}\mW_V \vx_1 + e^{\alpha'_{i, 2}}\mW_V \vx_2+...$. As $\alpha'(\vx_i, \vx_j)$ is both upper and lower bounded by AS\ref{AS-3} for all $i, j \in [n]$ and $\vx_i$ is bounded by AS\ref{AS-2}, $e^{\alpha'_{i, 1}}\mW_V \vx_1 + e^{\alpha'_{i, 2}}\mW_V \vx_2+...$ is also bounded from below and above. $\tmp{}$ will also be bounded as we can control the error to any precision. 

$\ps{}$ is an approximation of $e^{\alpha'_{i, 1}} + e^{\alpha'_{i, 2}} + ...$. For the same reason as the case above, $\ps{}$ is also bounded both below and above.

$\vy_{0:d}$ will be $\tilde{\vx}_i$ at $i$-th iteration so it will also be bounded by AS\ref{AS-2}.

Therefore we conclude the proof. 
\end{proof}

\subsection{A Running Example}
\label{subsec-a-running-example}
We provide an example to illustrate how node features are updated in each iteration. 

\textbf{Time $0$}: 
All nodes are initialized as indicated in \Cref{subsec-mpnn-form}. Virtual node feature  $\vz_{\vn}^{(0)} = [\bm{0}_d, \vv_1, 0]$. Graph node feature $\vz_{i}^{(0)} = [\vx_i, \bm{0}_{d}, 0]$ for all $i\in[n]$.

\textbf{Time $1$}:

For virtual node, according to the definition of $\pool_{j \in [n]} \phi_{\vngn}^{(1)}$ in \Cref{equ:vn-gn}, it will pick an approximation of $\vx_1$, i.e. $\tilde{\vx}_1$. Note that the approximation error can be made arbitrarily small. VN's node feature $\vz_{\vn}^{(1)} = [\tilde{\vx}_1, \vv_2, 0]$. 

For $i$-th graph node feature,  $\vz_{\vn}^{(0)} = \bm{1}_{d}$, and $\vz_i^{(0)} = [\vx_i, \bm{0}_{d}, 0]$. According to $\gamma_{\gn}^{(k)}$ in  \Cref{eqn-gn-update-function}, $\vz_i^{(1)} = [\vx_i, \bm{0}_{d}, 0]$.

\textbf{Time $2$}:

For the virtual node feature: similar to the analysis in time 1, VN's feature $\vz_{\vn}^{(2)} = [\tilde{\vx}_2, \vv_3, 0]$ now. Note that the weights and bias in $\pool_{j \in [n]} \phi_{\vngn}^{(2)}$ will be different from those in $\pool_{j \in [n]} \phi_{\vngn}^{(1)}$.

For $i$-th graph node feature, as $\vz_{\vn}^{(1)} = [\tilde{\vx}_1, \vv_2, 0]$ and $\vz^{(1)}_i = [\vx_i, \bm{0}_{d}, 0]$, according to $\gamma_{\gn}^{(k)}$ in  \Cref{eqn-gn-update-function}, $\vz^{(2)}_i =[\vx_i, e^{\widetilde{\alpha'_{i, 1}}}\mW_V \tilde{\vx}_1, e^{\widetilde{\alpha'_{i, 1}}}]$. Here $\widetilde{\alpha'_{i, 1}}:=\alpha'(\vx_i, \tilde{\vx}_1)$. We will use similar notations in later iterations. 
\footnote{To reduce the notation clutter and provide an intuition of the proof, we omit the approximation error introduced by using MLP to approximate
aggregation/message/update function, and assume the aggregation/message/update can be exactly implemented by neural networks. In the proofs, approximation error by MLP is handled rigorously. } 

\textbf{Time $3$}:

Similar to the analysis above, $\vz_{\vn}^{(3)} = [\widetilde{\vx_3}, \vv_4, 0]$.

$\vz_{i}^{(3)} = [\vx_i, e^{\widetilde{\alpha'_{i, 1}}}\mW_V \tilde{\vx}_1 + e^{\widetilde{\alpha'_{i, 2}}}\mW_V \tilde{\vx}_2, e^{\widetilde{\alpha'_{i, 1}}}+e^{\widetilde{\alpha'_{i, 2}}}]$.

\textbf{Time $n$}:

$\vz_{\vn}^{(n)} = [\tilde{\vx}_n, \bm{0}_d, 0]$. 

$\vz_{i}^{(n)} = \vx_i, 
\underbrace{e^{\widetilde{\alpha'_{i, 1}}}\mW_V \tilde{\vx}_1 + ... + e^{\widetilde{\alpha'_{i, n-1}}}\mW_V \widetilde{\vx_{n-1}}}_{n-1 \text{ terms}}, \\
\underbrace{e^{\widetilde{\alpha'_{i, 1}}}+e^{\widetilde{\alpha'_{i, 2}}}+... + e^{\widetilde{\alpha'_{i, n-1}}}]}_{n-1 \text{ terms}}
$.

\textbf{Time $n+1$}:

According to \Cref{subsubsec-vn}, in $n+1$ iteration, the virtual node's feature will be $\bm{1}_{d}$. 

$\vz_{i}^{(n+1)} = [\vx_i, \sum_{k\in [n]} e^{\widetilde{\alpha'_{ik}}}\mW_V\tilde{\vx}_k, \sum_{k\in [n]} e^{\widetilde{\alpha'_{ik}}}]$

\textbf{Time $n+2$ (final layer)}:

For the virtual node, its node feature will stay the same.

For the graph node feature, the last layer will serve as a normalization of the attention score (use MLP to approximate vector-scalar multiplication), and set the last channel to be 0 (projection), resulting in an approximation of $[\vx_i, \frac{\sum_{k\in [n]} e^{\widetilde{\alpha'_{ik}}} \mW_V \tilde{\vx}_k}{\sum_{k\in [n]} e^{\widetilde{\alpha'_{ik}}}}, 0]$. Finally, we need one more linear transformation to make the node feature become $[\frac{\sum_{k\in [n]}  e^{\widetilde{\alpha'_{ik}}} \mW_V \tilde{\vx}_k}{\sum_{k\in [n]} e^{\widetilde{\alpha'_{ik}}}}, \bm{0}_d, 0]$. The first $d$ channel is an approximation of the output of the self-attention layer for node $i$ where the approximation error can be made as small as possible. This is proved in \Cref{sec:approximate-full-self-attention}, and we conclude that heterogeneous MPNN + VN can approximate the self-attention layer $\mL$ to arbitrary precision with $\mc{O}(n)$ MPNN layers.

\subsection{Controlling Error}\label{subsec-controling-error}

On the high level, there are three major sources of approximation error: 1) approximate hard selection with self-attention and 2) approximate equation $\gamma_{\gn}^{(k)}$ with MLPs, and 3) attention normalization in the last layer. 
In all cases, we aim to approximate the output of a continuous map $\mL_c(\vx)$. However, our input is usually not exact $\vx$ but an approximation of $\tilde{\vx}$. We also cannot access the original map $\mL_c$ but instead, an MLP approximation of $\mL_c$, denoted as $\mL_{\MLP}$. The following lemma allows to control the difference between $\mL_c(\vx)$ and $\mL_{\MLP}(\tilde{\vx})$. 

\begin{lemma}
\label{lemma:approximation-meta-lemma}
Let $\mL_c$ be a continuous map from compact set to compact set in Euclidean space. Let $\mL_{\MLP}$ be the approximation of $\mL_c$ by MLP. If we can control $\norm{\vx - \tilde{\vx} }$ to an arbitrarily small degree, we can then control the error $\norm{\mL_c(\vx)-\mL_{\MLP}(\tilde{\vx})}$ arbitrarily small. 
\end{lemma}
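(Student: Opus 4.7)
The plan is a standard two-step triangle-inequality argument that splits the error into an input-perturbation part (controlled by uniform continuity of $\mL_c$) and an architectural part (controlled by the universal approximation theorem for MLPs). Given a target accuracy $\epsilon > 0$, I will insert $\mL_c(\tilde{\vx})$ and write
\begin{equation*}
\|\mL_c(\vx)-\mL_{\MLP}(\tilde{\vx})\|
\;\leq\; \|\mL_c(\vx)-\mL_c(\tilde{\vx})\|
\;+\; \|\mL_c(\tilde{\vx})-\mL_{\MLP}(\tilde{\vx})\|,
\end{equation*}
and then bound each summand by $\epsilon/2$.

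For the first summand, I would invoke the fact that a continuous map on a compact set is uniformly continuous, so there exists $\delta > 0$ such that $\|\vx-\tilde{\vx}\| < \delta$ forces $\|\mL_c(\vx)-\mL_c(\tilde{\vx})\| < \epsilon/2$. Before applying uniform continuity, however, I need $\tilde{\vx}$ to lie in a compact set on which $\mL_c$ is defined and continuous; the cleanest way is to fix once and for all a slightly enlarged compact neighborhood $K'$ of the original compact input domain $K$ (say a closed $\delta_0$-fattening for some fixed $\delta_0 > 0$), observe that $\mL_c$ extends continuously to $K'$ (or we simply assume the domain of $\mL_c$ already contains $K'$, which holds in all invocations of this lemma), and then shrink $\delta$ so that $\delta \leq \delta_0$ and the uniform-continuity bound holds on $K'$. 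By hypothesis we may choose the approximation of $\vx$ fine enough that $\|\vx-\tilde{\vx}\| < \delta$, which handles this term.

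For the second summand, I would apply the universal approximation theorem for MLPs on the compact set $K'$: there exists an MLP $\mL_{\MLP}$ with $\sup_{\vy \in K'}\|\mL_c(\vy)-\mL_{\MLP}(\vy)\| < \epsilon/2$. Since $\tilde{\vx} \in K'$, the second summand is bounded by $\epsilon/2$. Summing the two bounds yields the desired $\epsilon$-approximation.

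The main obstacle is the bookkeeping about domains, not any deep estimate: one must make sure the MLP approximation is chosen on a compact set large enough to contain all possible perturbed inputs $\tilde{\vx}$, and that the $\delta$ governing uniform continuity is chosen before the approximation error $\|\vx - \tilde{\vx}\|$ is committed. Once these quantifiers are ordered correctly (fix $\epsilon$, then $K'$, then $\delta$ via uniform continuity, then the MLP, and finally the approximation quality of $\tilde{\vx}$), the proof is a two-line triangle inequality.
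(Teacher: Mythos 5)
Your triangle inequality inserts $\mL_c(\tilde{\vx})$ as the intermediate term, whereas the paper inserts $\mL_{\MLP}(\vx)$: you write
\[
\|\mL_c(\vx)-\mL_{\MLP}(\tilde{\vx})\| \le \|\mL_c(\vx)-\mL_c(\tilde{\vx})\| + \|\mL_c(\tilde{\vx})-\mL_{\MLP}(\tilde{\vx})\|,
\]
while the paper writes $\|\mL_c(\vx)-\mL_{\MLP}(\tilde{\vx})\| \le \|\mL_c(\vx)-\mL_{\MLP}(\vx)\| + \|\mL_{\MLP}(\vx)-\mL_{\MLP}(\tilde{\vx})\|$, then bounds the first term by universality and the second by uniform continuity of $\mL_{\MLP}$ on a compact set (via Heine--Cantor). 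Both decompositions are valid and both establish the lemma, but the paper's choice sidesteps exactly the domain bookkeeping you spend a paragraph on: your first term $\|\mL_c(\vx)-\mL_c(\tilde{\vx})\|$ forces you to apply uniform continuity of $\mL_c$ at the perturbed point $\tilde{\vx}$, which may leave the compact set $K$ on which $\mL_c$ is given, so you need either a continuous extension to a fattening $K'$ or an extra hypothesis that $\mL_c$ is already defined there (you honestly flag this). By instead putting both uniform-continuity inputs through $\mL_{\MLP}$ --- which, being an MLP, is continuous on all of Euclidean space --- the paper never needs to extend the target $\mL_c$; it only ever evaluates $\mL_c$ at the original $\vx \in K$, where the universal approximation bound lives. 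The quantifier ordering also flips slightly (paper: fix $\epsilon$, choose the MLP, then read off $\delta$ from that MLP's modulus of continuity; you: fix $\epsilon$, read off $\delta$ from $\mL_c$, then choose the MLP on $K'$), but both orders are legal for this statement since one gets to choose both the MLP and the perturbation tolerance. Your argument is correct as written under your stated caveat; the paper's variant is marginally cleaner because it trades the extension question for the free global continuity of the approximant.
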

\begin{proof}
By triangle inequality $\norm{\mL_c(\vx)-\mL_{\MLP}(\tilde{\vx})} \leq \norm{\mL_c(\vx) - \mL_{\MLP}(\vx))} + \norm{\mL_{\MLP}(\vx) - \mL_{\MLP}(\tilde{\vx})}$. 

For the first term $\norm{\mL_c(\tilde{\vx}) -\mL_{\MLP}(\tilde{\vx})}$, by the universality of MLP, we can control the error $\norm{\mL_c(\tilde{\vx}) -\mL_{\MLP}(\tilde{\vx})}$ in arbitrary degree. 

For the second term $\norm{\mL_{\MLP}(\vx) - \mL_{\MLP}(\tilde{\vx})}$, as $\mL_{\MLP}$ is continuous on a compact domain, it is uniformly continuous by Heine-Cantor theorem. This means that we can control the $\norm{\mL_{\MLP}(\vx) - \mL_{\MLP}(\tilde{\vx})}$ as long as we can control $\norm{\vx - \tilde{\vx}}$, independent from different $\vx$. By assumption, this is indeed the case so we conclude the proof. 
\end{proof}

\begin{remark}
The implication is that when we are trying to approximate the output of a continuous map $\mL_c$ on the compact domain by an MLP $\mL_{\MLP}$, it suffices to show the input is 1) $\norm{\mL_c - \mL_{\MLP}}_{\infty}$ and 2) $\norm{\tilde{\vx}-\vx}$ can be made arbitrarily small. The first point is usually done by the universality of MLP on the compact domain \citep{cybenko1989approximation}. The second point needs to be shown case by case. 

In the \Cref{subsec-a-running-example}, to simplify the notations we omit the error introduced by using MLP to approximate aggregation/message/update functions (continuous functions on the compact domain of $\R{d}$.) in MPNN + VN. \Cref{lemma:approximation-meta-lemma} justify such reasoning.  
\end{remark}

\begin{lemma}[$\tilde{\vx}_i$ approximates $\vx_i$. $\widetilde{\alpha'_{i, j}}$ approximates $\alpha'_{i, j}$.]\label{lemma-approximation-node-feature}
For any $\epsilon>0$ and $x\in \mathcal{X}$, there exist a set of weights for message/aggregate functions of the virtual node such that $||\vx_i - \tilde{\vx}_i||<\epsilon$ and $|\alpha'_{i, j} -\widetilde{\alpha'_{i, j}}| < \epsilon$.
\end{lemma}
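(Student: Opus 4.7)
The plan is to deduce both inequalities from the uniform soft-selection guarantee already established in \Cref{lemma-uniform-selection}, together with the compactness and Lipschitz-type estimates provided by AS\ref{AS-2} and AS\ref{AS-3}. First I would fix a target accuracy $\epsilon>0$ and ``budget'' an auxiliary $\epsilon'>0$, to be tuned at the end, controlling how close the attention distribution must be to a one-hot distribution.

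For the bound on $\|\vx_i-\tilde{\vx}_i\|$, recall that by the construction in \Cref{equ:vn-gn} the vector $\tilde{\vx}_i$ is the output of the soft-selection step at iteration $i$, i.e.\ a convex combination $\tilde{\vx}_i=\sum_{j}\alpha_{\vn}(\vz_{\vn}^{(i-1)},\vz_j^{(i-1)})\,\vx_j$ whose selecting query is $\vv_i$. Since $\mathcal{X}$ satisfies AS\ref{AS-1}, \Cref{lemma-uniform-selection} yields weights $\mW^{\vn}_{K,i},\mW^{\vn}_{Q,i}$ such that $\alpha_{\vn}(\vx_i,\vv_i)>1-\epsilon'$ \emph{uniformly} over $\vx\in\mathcal{X}$, and hence the remaining attention mass is at most $\epsilon'$. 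Combining with the uniform bound $\|\vx_j\|\le C_1$ from AS\ref{AS-2}, the triangle inequality gives
\begin{equation*}
\|\vx_i-\tilde{\vx}_i\|\ \le\ \epsilon'\,\|\vx_i\|+\sum_{j\neq i}\alpha_{\vn}(\vx_j,\vv_i)\,\|\vx_j\|\ \le\ 2C_1\epsilon',
\end{equation*}
so picking $\epsilon'\le \epsilon/(2C_1)$ delivers the first claim.

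For the bound $|\alpha'_{i,j}-\widetilde{\alpha'_{i,j}}|$, I would exploit that $\widetilde{\alpha'_{i,j}}=\alpha'(\vx_i,\tilde{\vx}_j)=\vx_i^{\top}\mW_Q\mW_K^{\top}\tilde{\vx}_j$ is bilinear in its two arguments, and that $\vx_i$ already equals the true first argument (the first $d$ coordinates of a graph node are preserved throughout the iterations by $\gamma_{\gn}^{(k)}$ in \Cref{eqn-gn-update-function}). Thus
\begin{equation*}
|\alpha'_{i,j}-\widetilde{\alpha'_{i,j}}|=|\vx_i^{\top}\mW_Q\mW_K^{\top}(\vx_j-\tilde{\vx}_j)|\le C_1 C_2^{2}\,\|\vx_j-\tilde{\vx}_j\|,
\end{equation*}
using AS\ref{AS-2} and AS\ref{AS-3}. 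Applying the first part of the lemma to the $j$-th iteration with accuracy $\epsilon/(C_1 C_2^{2})$ then yields the second inequality.

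The step I expect to need the most care is the first one: one must verify that the attention weights produced by \Cref{lemma-uniform-selection} really act uniformly over all admissible inputs, and that the tokens stored in the virtual node (namely $\vv_i$) and in the graph nodes (namely $\vx_i$) are available at the correct iteration so that the separation hypothesis in AS\ref{AS-1} actually applies at that layer. The rest is a routine compactness/uniform-continuity argument, and the combination of the two estimates above establishes the lemma.
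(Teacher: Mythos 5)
Your argument is correct and follows essentially the same route as the paper: invoke \Cref{lemma-uniform-selection} to make the VN attention distribution nearly one-hot, bound $\|\vx_i - \tilde{\vx}_i\|$ by the triangle inequality together with AS\ref{AS-2}, and then propagate this error through the bilinear form $\alpha'$ using the operator-norm bounds in AS\ref{AS-3}. The only (cosmetic) differences are that your observation that the off-diagonal attention mass sums to at most $\epsilon'$ gives the slightly sharper bound $2C_1\epsilon'$ instead of the paper's $nC_1\epsilon$, and the paper's proof additionally folds in the MLP-approximation error of $\alpha'$ (controlled via its \Cref{lemma:approximation-meta-lemma}), a term you defer to the later ``routine'' step as you note.
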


\begin{proof}
By \Cref{lemma-uniform-selection} We know that $\widetilde{\alpha_{i, j}} := \widetilde{\alpha}(\vx_i,\vx_j) \rightarrow \delta(i-j)$ as $C_3(\epsilon)$ goes to infinity. Therefore we have
\begin{equation}
||\tilde{\vx}_i - \vx_i|| = ||\sum_j \widetilde{\alpha_{i, j}}\vx_j-\vx_i|| = ||\sum (\widetilde{\alpha}_{i, j} - \delta(i-j))\vx_j|| < \epsilon \sum||\vx_j|| < n C_1 \epsilon
\end{equation}
As $n$ and $C_1$ are fixed, we can make the upper bound as small as we want by increasing $C_3$.

$|\alpha'_{i, j} -\widetilde{\alpha'_{i, j}}| =  |\alpha'(\vx_i, \vx_j) - \alpha'_{\MLP}(\tilde{\vx}_i, \vx_j) | = |\alpha'(\vx_i, \vx_j) - \alpha'(\tilde{\vx}_i, \vx_j) |  + |\alpha'(\tilde{\vx}_i, \vx_j) - \alpha'_{\MLP}(\tilde{\vx}_i, \vx_j)|=   |\alpha'(\vx_i - \tilde{\vx}_i, \vx_j)| = (\vx_i- \tilde{\vx}_i)^T\vx_jC_2^2 + \epsilon < nC_1\epsilon C_1C_2^2 + \epsilon = (nC_1^2C_2^2+1)\epsilon$. As $\alpha'_{i, j}, \widetilde{\alpha'_{i, j}}$ is bounded from above and below, it's easy to see that $|e^{\alpha'_{i, j}} -e^{\widetilde{\alpha'_{i, j}}}| = |e^{\alpha'_{i, j}}(1-e^{\alpha'_{i, j}- \widetilde{\alpha'_{i, j}}})| < C(1-e^{\alpha'_{i, j}- \widetilde{\alpha'_{i, j}}})$ can be controlled to arbitrarily degree. 
\end{proof}

\mainthm*
\begin{proof}
$i$-th MPNN + VN layer will select $\tilde{\vx}_i$, an arbitrary approximation $i$-th node feature $\vx_i$ via attention mechanism. This is detailed in the message/aggregation function of the virtual node in \Cref{subsubsec-vn}. Assuming the regularity condition on feature space $\mc{X}$, detailed in AS\ref{AS-1}, the approximation error can be made as small as needed, as shown in \Cref{lemma-uniform-selection,lemma-approximation-node-feature}. 

Virtual node will then pass the $\tilde{\vx}_i$ to all graph nodes, which computes an approximation of $e^{\alpha'(\tilde{\vx}_i, \vx_j)}, \forall j\in [n]$. This step is detailed in the update function $\gamma_{\gn}^{(k)}$ of graph nodes, which can also be approximated arbitrarily well by MLP, proved in \Cref{lemma-approximate-update-function}. By \Cref{lemma:approximation-meta-lemma}, we have an arbitrary approximation of $e^{\alpha'(\tilde{\vx}_i, \vx_j)}, \forall j\in [n]$, which itself is an arbitrary approximation of $e^{\alpha'(\vx_i, \vx_j)}, \forall j\in [n]$. 

Repeat such procedures $n$ times for all graph nodes, we have an arbitrary approximation of $\sum_{k\in [n]} e^{\alpha'_{ik}}\mW_V \vx_k \in \R{d}$ and $\sum_{k\in [n]} e^{\alpha'_{ik}} \in \R{}$. Finally, we use the last layer to approximate attention normalization $L_c(\vx, y) = \frac{\vx}{y}$, where $\vx \in \R{d}, y\in \R{}$. As inputs for attention normalization are arbitrary approximation of $\sum_{k\in [n]} e^{\alpha'_{ik}}\mW_V \vx_k$ and ${\sum_{k\in [n]} e^{\alpha'_{ik}}}$, both of them are lower/upper bounded according to AS\ref{AS-2} and AS\ref{AS-3}. Since the denominator is upper bounded by a positive number, this implies that the target function $L_c$ is continuous in both arguments. By evoking \Cref{lemma:approximation-meta-lemma} again, we conclude that we can approximate its output $\frac{\sum_{k\in [n]} e^{\alpha'_{ik}}\mW_V \vx_k}{{\sum_{k\in [n]} e^{\alpha'_{ik}}}}$ arbitrarily well. This concludes the proof. 

\end{proof}

\subsection{Relaxing Assumptions with More Powerful Attention}
\label{subsec:relax-assumption}
One limitation of \Cref{thm:constant-width} are assumptions on node features space $\mc{X}$: we need to 1) restrict the variability of node feature so that we can select one node feature to process each iteration. 2) The space of the node feature also need to satisfy certain configuration in order for VN to select it.  For 2), we now consider a different attention function for $\alpha_{\vn}$ in MPNN + VN that can relax the assumptions AS\ref{AS-1} on $\mc{X}$.

\textbf{More powerful attention mechanism.} From proof of \Cref{thm:constant-width}, we just need $\alpha(\cdot, \cdot)$ uniformly select every node in $\mX\in \mathcal{X}$. The unnormalized bilinear attention $\alpha'$ is weak in the sense that $f(\cdot) = \ip{\vx_i\mW_Q\mW_K^T, \cdot}$ has a linear level set. Such a constraint can be relaxed via an improved attention module \gatii. Observing the ranking of the attention scores given by \texttt{GAT} \citep{velivckovic2017graph} is unconditioned on the query node, \citet{brody2021attentive} proposed \gatii, a more expressive attention mechanism. 
In particular, the unnormalized attention score $\alpha'_{\text{GATv2}}(\vu, \vv) :=  \va^T \operatorname{LeakyReLU}\left(\mW \cdot\left[\vu \| \vv\right] + \vb \right)$, where $[\cdot || \cdot ]$ is concatenation. We will let $\alpha_{\vn} = \alpha_{\gat}$ to select features in $\pool_{j \in [n]} \phi_{\vngn}^{(k)}$. 

\begin{figure}[hbtp]
  \centering
  \includegraphics[width=.35\linewidth]{./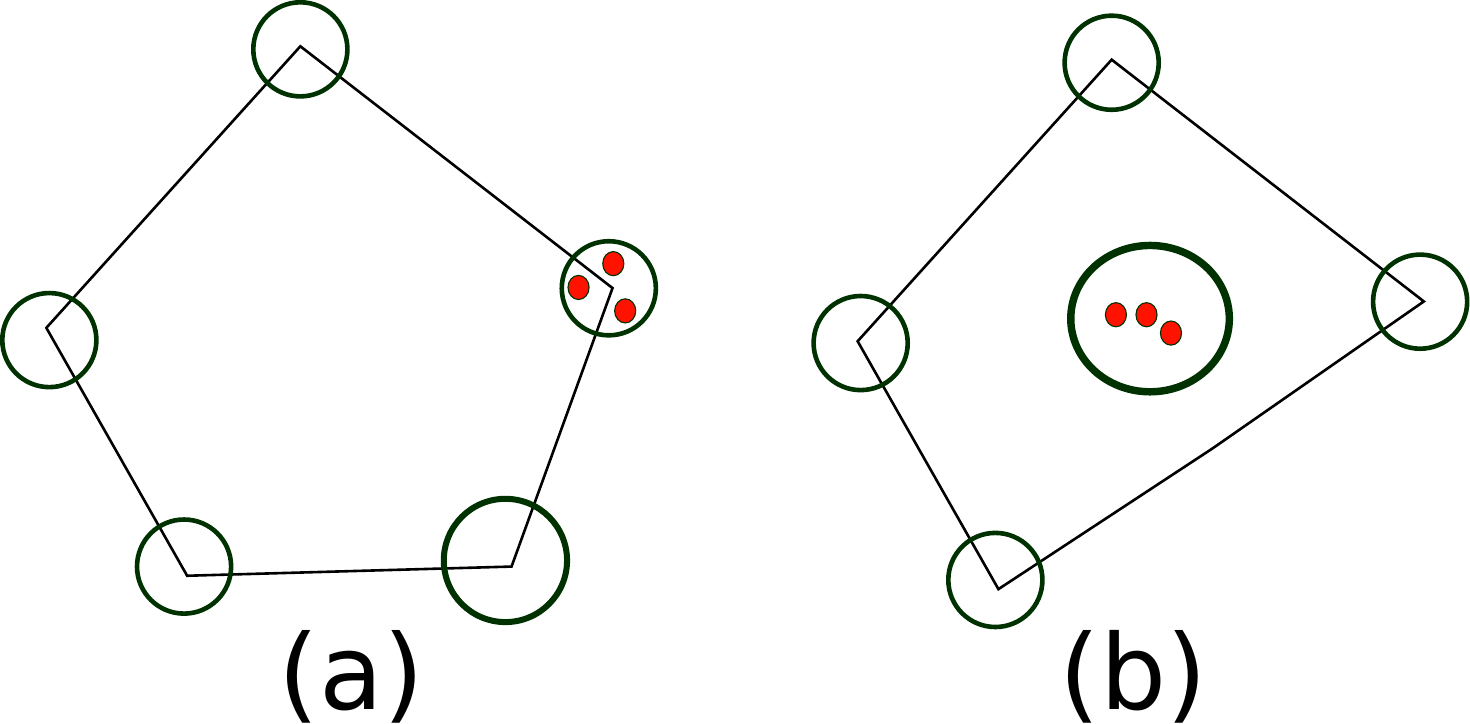}
\caption{In the left figure, we have one example of $\mc{X}$ being $(\mv,\delta)$ separable, for which $\alpha$ can uniformly select any point (marked as red) $\vx_i \in \mc{X}_i$. In the right figure, we change $\alpha_{\vn}$ in MPNN + VN to $\alpha_{\gat}$, which allows us to select more diverse feature configurations. The cluster in the middle cannot be selected by any $\alpha \in \mc{A}$ but can be selected by $\alpha_{\gat}$ according to \Cref{prop:gat-v2-selection}. }
\label{fig:convergence}
\end{figure}

\begin{restatable}{lemma}{gatvtwoUniversality}
\label{lemma-gatv2-universality}
$\alpha'_{\gat}(\cdot, \cdot)$ can approximate any continuous function from $\R{d} \times \R{d} \rightarrow \R{}$. For any $\vv\in \R{d}$, a restriction of $\alpha'_{\gat}(\cdot, \vv)$ can approximate any continuous function from $\R{d} \rightarrow \R{}$.
\end{restatable}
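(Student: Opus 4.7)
The plan is to observe that $\alpha'_{\gat}(\vu,\vv) = \va^T \operatorname{LeakyReLU}(\mW[\vu \| \vv] + \vb)$ is exactly a one-hidden-layer feed-forward neural network with LeakyReLU nonlinearity, input $[\vu\|\vv]\in\R{2d}$, hidden weight $\mW$, bias $\vb$, and linear read-out $\va$. Since LeakyReLU is continuous and not a polynomial (it is piecewise linear with two distinct slopes), the classical universal approximation theorem in the Leshno--Lin--Pinkus--Schocken form applies: for any compact $K\subset\R{2d}$, any continuous $g:K\to\R{}$, and any $\epsilon>0$, there exist a width $m$ and parameters $\mW\in\R{m\times 2d}$, $\vb\in\R{m}$, $\va\in\R{m}$ with $\sup_{\vw\in K}|g(\vw) - \va^T\operatorname{LeakyReLU}(\mW\vw+\vb)|<\epsilon$. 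Re-identifying $\vw$ with $[\vu\|\vv]$ immediately gives the first claim.

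For the second claim I would block-partition $\mW = [\mW_1 \;\; \mW_2]$ with $\mW_1,\mW_2\in\R{m\times d}$, so that $\mW[\vu\|\vv]+\vb = \mW_1\vu + (\mW_2\vv+\vb)$. Fixing $\vv$ and setting $\vb' := \mW_2\vv+\vb$, the restricted map becomes $\vu\mapsto \va^T\operatorname{LeakyReLU}(\mW_1\vu+\vb')$, which is again a one-hidden-layer LeakyReLU network, now on $\R{d}$. Crucially, for any target bias $\vb'\in\R{m}$ one can realize it by choosing $\vb := \vb' - \mW_2\vv$ (for any fixed $\mW_2$, e.g.\ $\mW_2=0$), so the attainable family of networks is the full class of two-layer LeakyReLU MLPs on $\R{d}$. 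A second invocation of the universal approximation theorem then finishes the proof.

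I do not anticipate a real obstacle here: the statement reduces essentially verbatim to classical universal approximation once one parses $\alpha'_{\gat}$ as a two-layer MLP. The only mildly subtle point is that universal approximation is on compact sets, not on all of $\R{d}$; this should be interpreted in the usual compact-approximation sense consistent with the rest of the paper (where inputs always lie in a compact $\mathcal{X}$ by AS\ref{AS-2}), and does not require any extra argument beyond restricting $K$ to the relevant compact domain.
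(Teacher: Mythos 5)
Your proof is correct and follows essentially the same route as the paper's: recognize $\alpha'_{\gat}$ as a one-hidden-layer MLP and invoke universal approximation twice, once on $\R{2d}$ and once on $\R{d}$ after absorbing the $\vv$-dependence into the bias. Your column-wise partition $\mW=[\mW_1\;\mW_2]$ with $\vb':=\mW_2\vv+\vb$ is in fact a cleaner way to write what the paper does with its $2\times 2$ block matrix, and citing the Leshno--Lin--Pinkus--Schocken theorem is the more appropriate reference here since LeakyReLU is non-polynomial but not sigmoidal, so Cybenko's original hypothesis does not literally apply.
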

\begin{proof}
Any function continuous in both arguments of $\alpha'_{\gat}$ is also continuous in the concatenation of both arguments. As any continuous functions in $\R{2d}$ can be approximated by  $\alpha'_{\gat}$ on a compact domain according to the universality of MLP \citep{cybenko1989approximation}, we finish the proof for the first statement.

For the second statement, we can write $W$ as $2\times 2$ block matrix and restrict it to cases where only $\mW_{11}$ is non-zero. Then we have 
\begin{equation}
\alpha'_{\gat}(\vu, \vv) 
= a^T \operatorname{LeakyReLU}\left(
  \left[ {\begin{array}{cc}
   \mW_{11} & \mW_{12} \\
   \mW_{21} & \mW_{22} \\
  \end{array} } \right] \cdot \left[\begin{array}{c}
   \vu  \\
   \vv \\
  \end{array} \right]+ \vb\right) 
=\va^T \operatorname{LeakyReLU}\left(\mW_{11}\vu + \vb\right) 
\end{equation} 
  which gives us an MLP on the first argument $\vu$. By the universality of MLP, we conclude the proof for the second statement. 

\end{proof}

\begin{definition}
\label{delta-sepration}
Given $\delta>0$, We call $\mathcal{X}$ is $\delta$ nonlinearly separable if and only if $\min_{i \neq j} d(\mc{X}_i, \mc{X}_j) > \delta$. 
\end{definition}

\begin{customAS}{3'}
\label{AS-4} 
$\mathcal{X}$ is $\delta$ nonlinearly separable for some $\delta>0$. 
\end{customAS}

\begin{restatable}{proposition}{gatvtwoselection}
\label{prop:gat-v2-selection}
If $\mc{X} \subset \R{n \times d}$ satisfies that $\mc{X}_i$ is $\delta$-separated from $\mc{X}_j$ for any $i, j \in [n]$, the following holds. For any $\mX\in \mc{X}$ and $i\in [n]$, there exist a $\alpha_{\gat}$ to select any $\vx_i \in \mc{X}_i$. This implies that we can arbitrarily approximate the self-attention layer $\mL$ after relaxing AS3 to AS3'. 
\end{restatable}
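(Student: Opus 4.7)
The plan is to replace the weak bilinear attention $\alpha_{\vn}$ used in the proof of \Cref{thm:constant-width} by $\alpha_{\gat}$, and show that the universality of GATv2 (\Cref{lemma-gatv2-universality}) combined with $\delta$-separation is sufficient to recover an analogue of \Cref{lemma-uniform-selection} that drives the whole construction. Everything else in the proof of \Cref{thm:constant-width} only uses the selection property of $\alpha_{\vn}$, so once the selection step is re-established, the rest of the argument goes through verbatim.

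The first step is the construction of a target ``selector function.'' Fix $i \in [n]$. By AS\ref{AS-2}, each $\mc{X}_j$ is contained in a compact ball $B \subset \R{d}$. By AS\ref{AS-4}, the compact sets $\mc{X}_i$ and $\bigcup_{j \neq i} \mc{X}_j$ are disjoint with pairwise distance at least $\delta > 0$. An Urysohn/mollification argument then produces a continuous function $f_i : \R{d} \to [0,1]$ that equals $1$ on $\mc{X}_i$ and $0$ on $\bigcup_{j \neq i}\mc{X}_j$. The second part of \Cref{lemma-gatv2-universality} says that for any fixed reference vector $\vv$ (for instance $\vv = \bm{0}$, which is admissible since the restriction of the weight $\mW$ zeroes out the second block), the map $\vu \mapsto \alpha'_{\gat}(\vu, \vv)$ can approximate any continuous function on $B$ arbitrarily well; so for any scaling constant $C > 0$ and tolerance $\eta > 0$ we can choose GATv2 weights with $|\alpha'_{\gat}(\vx, \vv) - C f_i(\vx)| < \eta$ uniformly on $B$.

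Next, I would push this approximation through softmax. For $\vx_j \in \mc{X}_j$, the unnormalized score is near $0$ when $j \neq i$ and near $C$ when $j = i$, so the normalized attention concentrates as $\alpha_{\gat}(\vx_i, \vv) \geq \tfrac{e^{C - \eta}}{e^{C - \eta} + (n-1) e^{\eta}}$. Sending $C \to \infty$ (and choosing $\eta$ small), this lower bound exceeds $1 - \epsilon$ for any prescribed $\epsilon > 0$, uniformly over $\vx_i \in \mc{X}_i$ and over the other nodes $\vx_j \in \mc{X}_j$. This is exactly the statement of \Cref{lemma-uniform-selection} with $\bar{\alpha}$ replaced by $\alpha_{\gat}$, and without any $(\mv,\delta)$ separability hypothesis on $\mc{X}$.

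Finally, I would plug this new selection lemma into the architecture from \Cref{subsec-mpnn-form}. The message/aggregation of the virtual node now uses $\alpha_{\gat}$ instead of $\alpha$, but the downstream steps (graph-node updates maintaining $\tmp{}$ and $\ps{}$, the final normalization layer) are unchanged. The error control of \Cref{lemma-approximation-node-feature} and \Cref{lemma:approximation-meta-lemma} still applies because it only uses that the selected $\tilde{\vx}_i$ is arbitrarily close to $\vx_i$, which is what the new selection lemma provides. The main obstacle is purely bookkeeping: making sure the constants $C$ and $\eta$ in the GATv2 approximation are chosen as functions of the final tolerance, and checking that different layers (one per target index $i$) can independently pick their own GATv2 parameters; both are routine once the selector functions $f_i$ have been constructed, which is where the $\delta$-separation is used in an essential way.
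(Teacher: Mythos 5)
Your proposal takes essentially the same approach as the paper's proof: both construct, via the $\delta$-separation, a continuous selector function on $\R{d}$ that is strictly higher on $\mc{X}_i$ than on $\bigcup_{j\neq i}\mc{X}_j$, and then invoke the universality of $\alpha'_{\gat}$ from \Cref{lemma-gatv2-universality} to approximate it. If anything, your writeup is a bit more careful than the paper's in carrying the scaling constant $C$ explicitly through the softmax and tying the result back to \Cref{lemma-uniform-selection}, whereas the paper fixes a gap of $1.5$ and leaves the amplification step implicit.
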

\begin{proof}
For any $i \in [n]$, as $\mc{X}_i$ is $\delta$-separated from other $\mc{X}_j, \forall j\neq i$, we can draw a region $\Omega_i \subset \R{d}$ that contains $\mc{X}_i$ and separate $\mc{X}_i$ from other $\mc{X}_j (j\neq i)$, where the distance from $\mc{X}_i$ from other $\mc{X}_j$ is at least $\delta$ according to the definition of \Cref{delta-sepration}. Next, we show how to construct a continuous function $f$ whose value in $\mc{X}_i$ is at least 1 larger than its values in any other $\mc{X}_j$  $\forall j\neq i$. 


We set the values of $f$ in $\mc{X}_i$ to be 1.5 and values of $f$ in $\mc{X}_j, \forall j\neq i$ to be 0. We can then interpolate $f$ in areas outside of $\cup \mc{X}_i$ (one way is to set the values of $f(x)$ based on $d(x, \mc{X}_i$), which results in a continuous function that satisfies our requirement.   
By the universality of $\alpha_{\gat}$, we can approximate $f$ to arbitrary precision, and this will let us select any $\mc{X}_i$. 
\end{proof}








\section{On the Limitation of MPNN + VN}
\label{sec:limitation-of-approximate-transformer}
Although we showed that in the main paper, MPNN + VN of varying depth/width can approximate the self-attention of full/linear transformers, this does not imply that there is no difference in practice between MPNN + VN and GT. Our theoretical analysis mainly focuses on approximating self-attention without considering computational efficiency. In this section, we mention a few limitations of MPNN + VN compared to GT.  

\subsection{Representation Gap}
The main limitation of deep MPNN + VN approximating full self-attention is that we require a quite strong assumption: we restrict the variability of node features in order to select one node feature to process each iteration. Such assumption is relaxed by employing stronger attention in MPNN + VN but is still quite strong. 

For the large width case, the main limitation is the computational complexity: even though the self-attention layer requires $\mc{O}(n^2)$ complexity, to approximate it in wide MPNN + VN framework, the complexity will become $\mc{O}(n^d)$ where $d$ is the dimension of node features.

We think such limitation shares a similarity with research in universal permutational invariant functions. Both \DS{} \citep{zaheer2017deep} and Relational Network \citep{santoro2017simple} are universal permutational invariant architecture but there is still a representation gap between the two \citep{zweig2022exponential}. Under the restriction to analytic activation functions, one can construct a symmetric function acting on sets of size $n$ with elements in dimension $d$, which can be efficiently
approximated by the Relational Network, but provably requires width exponential in $n$ and $d$ for the \DS{}. We believe a similar representation gap also exists between GT and MPNN + VN and leave the characterization of functions lying in such gap as the future work. 

\subsection{On The Difficulty of Approximating Other Linear Transformers}
\label{subsec:ohter-linear-transformer}
In \Cref{sec:shallow-narrow-attention}, we showed MPNN + VN of $\mc{O}(1)$ width and depth can approximate the self-attention layer of one type of linear transformer, Performer. The literature on efficient transformers is vast \cite{tay2020efficient} and we do not expect MPNN + VN can approximate many other efficient transformers. Here we sketch a few other linear transformers that are hard to  approximate by MPNN + VN of constant depth and width. 

Linformer \citep{wang2020linformer} projects the $n\times d$ dimension keys and values to $k\times d$ suing additional projection layers, which in graph setting is equivalent to graph coarsening. As MPNN + VN still operates on the original graph, it fundamentally lacks the key component to approximate Linformer. 


We consider various types of efficient transformers effectively generalize the virtual node trick. By first switching to a more expansive model and reducing the computational complexity later on, efficient transformers effectively explore a larger model design space than MPNN + VN, which always sticks to the linear complexity.

\subsection{Difficulty of Representing SAN Type Attention}
In SAN \citep{kreuzer2021rethinking}, different attentions are used conditional on whether an edge is presented in the graph or not, detailed below. One may wonder whether we can approximate such a framework in MPNN + VN. 

In our proof of using MPNN + VN to approximate regular GT, we mainly work with \Cref{def:simplified-hetero-mpnn-vn} where we do not use any \gngn{}   edges and therefore not leverage the graph topology. It is straightforward to use \gngn{} edges and obtain the different message/update/aggregate functions for \gngn{} edges non-\gngn{} edges. Although we still achieve the similar goal of SAN to condition on the edge types, it turns out that we can not arbitrarily approximate SAN. 

Without loss of generality, SAN uses two types of attention depending on whether two nodes are connected by the edge. Specifically, 
\begin{equation}
\begin{aligned}
& \hat{\boldsymbol{w}}_{i j}^{k, l}=\left\{\begin{array}{lr}
\frac{\boldsymbol{Q}^{1, k, l} \boldsymbol{h}_i^l \circ \boldsymbol{K}^{1, k, l} \boldsymbol{h}_j^l \circ \boldsymbol{E}^{1, k, l} \boldsymbol{e}_{i j}}{\sqrt{d_k}} & \text { if } i \text { and } j \text { are connected in sparse graph } \\
\frac{\boldsymbol{Q}^{2, k, l} \boldsymbol{h}_i^l \circ \boldsymbol{K}^{2, k, l} \boldsymbol{h}_j^l \circ \boldsymbol{E}^{2, k, l} \boldsymbol{e}_{i j}}{\sqrt{d_k}} & \text { otherwise }
\end{array}\right\} \\
& w_{i j}^{k, l}=\left\{\begin{array}{cc}
\frac{1}{1+\gamma} \cdot \operatorname{softmax}\left(\sum_{d_k} \hat{\boldsymbol{w}}_{i j}^{k, l}\right) & \text { if } i \text { and } j \text { are connected in sparse graph } \\
\frac{\gamma}{1+\gamma} \cdot \operatorname{softmax}\left(\sum_{d_k} \hat{\boldsymbol{w}}_{i j}^{k, l}\right) & \text { otherwise }
\end{array}\right\}
\end{aligned}
\end{equation}
where $\circ$ denotes element-wise multiplication and $\boldsymbol{Q}^{1, k, l}, \boldsymbol{Q}^{2, k, l}, \boldsymbol{K}^{1, k, l}, \boldsymbol{K}^{2, k, l}, \boldsymbol{E}^{1, k, l}, \boldsymbol{E}^{2, k, l} \in$ $\mathbb{R}^{d_k \times d}$. $\gamma \in \mathbb{R}^{+}$is a hyperparameter that tunes the amount of bias towards full-graph attention, allowing flexibility of the model to different datasets and tasks where the necessity to capture long-range dependencies may vary. 

To reduce the notation clutter, we remove the layer index $l$, and edge features, and also consider only one-attention head case (remove attention index $k$). The equation is then simplified to
\begin{equation}
\label{equ:san-simplified-attention}
\begin{aligned}
& \hat{\boldsymbol{w}}_{i j}=\left\{\begin{array}{lr}
\frac{\boldsymbol{Q}^{1} \boldsymbol{h}_i^l \circ \boldsymbol{K}^{1} \boldsymbol{h}_j^l }{\sqrt{d_k}} & \text { if } i \text { and } j \text { are connected in sparse graph } \\
\frac{\boldsymbol{Q}^{2} \boldsymbol{h}_i^l \circ \boldsymbol{K}^{2} \boldsymbol{h}_j^l}{\sqrt{d_k}} & \text { otherwise }
\end{array}\right\} \\
& w_{i j}=\left\{\begin{array}{cc}
\frac{1}{1+\gamma} \cdot \operatorname{softmax}\left(\sum_{d} \hat{\boldsymbol{w}}_{i j}\right) & \text { if } i \text { and } j \text { are connected in sparse graph } \\
\frac{\gamma}{1+\gamma} \cdot \operatorname{softmax}\left(\sum_{d} \hat{\boldsymbol{w}}_{i j}\right) & \text { otherwise }
\end{array}\right\}
\end{aligned}
\end{equation}
We will then show that \Cref{equ:san-simplified-attention} can not be expressed (up to an arbitrary approximation error) in MPNN + VN framework. To simulate SAN type attention, our MPNN + VN framework will have to first simulate one type of attention for all edges, as we did in the main paper, and then simulate the second type of attention between \gngn{} edges by properly offset the contribution from the first attention. This seems impossible (although we do not have rigorous proof) as we cannot express the difference between two attention in the new attention mechanism.

\section{Experimental Details}
\subsection{Dataset Description}
\textbf{ogbg-molhiv} and \textbf{ogbg-molpcba} \citep{hu2020open} are molecular property prediction datasets
adopted by OGB from MoleculeNet. These datasets use a common node (atom) and edge (bond)
featurization that represent chemophysical properties. 
The prediction task of ogbg-molhiv is a binary
classification of molecule's fitness to inhibit HIV replication. The ogbg-molpcba, derived from
PubChem BioAssay, targets to predict the results of 128 bioassays in the multi-task binary classification
setting.

\textbf{ogbg-ppa} \citep{wu2021representing} consists of protein-protein association (PPA) networks derived from
1581 species categorized into 37 taxonomic groups. Nodes represent proteins and edges encode the
normalized level of 7 different associations between two proteins. The task is to classify which of the
37 groups does a PPA network originate from.

\textbf{ogbg-code2} \citep{wu2021representing} consists of abstract syntax trees (ASTs) derived from the source
code of functions written in Python. The task is to predict the first 5 subtokens of the original
function's name. 

\textbf{OGB-LSC PCQM4Mv2} \citep{hu2021ogb} is a large-scale molecular dataset that shares the
same featurization as ogbg-mol* datasets. It consists of 529,434 molecule graphs. The task is to predict the HOMO-LUMO gap, a quantum physical property originally calculated using Density Functional Theory. True labels for original
test-dev and test-challange dataset splits are kept private by the OGB-LSC challenge organizers.
Therefore for the purpose of this paper, we used the original validation set as the test set, while we
left out random 150K molecules for our validation set.

\subsection{Reproducibility}
For LRGB results in \Cref{subsec:lrgb}, we reproduce the original results up to very small differences. 
\begin{table}[h]
    \caption{Reproduce the original results up to small differences. No VN is used. 
    }
    \label{tab:experiments_peptides_reproduce}
    \begin{adjustwidth}{-2.5         cm}{-2.5                    cm}\centering
    \scalebox{0.9}{
    \setlength\tabcolsep{4pt}        
    \begin{tabular}{l                c                           c                  c                             c                               c}\toprule
    \multirow{2}{*}{\textbf{Model}}  &\multirow{2}{*}{\textbf{\# Params.}}          &\multicolumn{2}{c}{\pepfunc} &\multicolumn{2}{c}{\pepstruct} \\                          \cmidrule(lr){3-4} \cmidrule(lr){5-6}
    &                                &\textbf{Test              AP (reproduce)}                &\textbf{Test                 AP                              $\uparrow$}                 &\textbf{Test     MAE (reproduce)}               &\textbf{Test MAE         $\downarrow$} \\\midrule
    GCN                              &508k                       &0.5918$\pm$0.0065 &0.5930$\pm$0.0023            &0.3468$\pm$0.0009              &0.3496$\pm$0.0013          \\
    GINE                             &476k                       &0.5595$\pm$0.0126 &0.5498$\pm$0.0079            &0.3532$\pm$0.0024              &0.3547$\pm$0.0045          \\
    GatedGCN                         &509k                       &0.5886$\pm$0.0027 &0.5864$\pm$0.0077            &0.3409$\pm$0.0011              &0.3420$\pm$0.0013          \\
    GatedGCN+RWSE                    &506k                       &0.6083$\pm$0.0032 &0.6069$\pm$0.0035            &0.3377$\pm$0.0025              &0.3357$\pm$0.0006          \\                 
    \bottomrule
    \end{tabular}
    }
    \end{adjustwidth}
\end{table}

\subsection{The Role of Graph Topology}
In our experiments, we considered graph topology in experiments (i.e., message passing operates on both GN-VN (graph node-virtual node) and GN-GN edges). To understand the role of GN-VN and GN-GN edges, we carried out a set of new experiments where we discard the original graph topology, and only do message passing on GN-VN edges, for Peptides-func \& Peptides-struct datasets. The results are shown in \Cref{tab:graph_topology}.  

We observe that in general, MPNN + VN using GN-VN edges only perform slightly worse than MPNN + VN using both GN-VN and GN-GN edges. However, it still performs better than the standard MPNN without VN. We believe adding VN as a simple way of long-range modeling is the main reason we see good results on Peptides-func \& Peptides-struct datasets. Utilizing local graph topology in MPNN will further improve the performance. 

In general, combining local (message passing) and global modeling (such as GT and VN) in GNN is an active research direction, with novel applications in macromolecule (DNA, RNA, Protein) modeling. In the recent SOTA model GraphGPS \citep{rampavsek2022recipe},  MPNN is interleaved with GT. Consistent with our findings,  \citet{rampavsek2022recipe} also showed both the local component (MPNN) and global component (GT) contribute to the final performance.   

\begin{table}[h]
\label{tab:graph_topology}
\centering
\caption{Utilizing local graph topology in MPNN will further improve the performance on \pepfunc and \pepstruct{}.}
\scalebox{0.65}{
\begin{tabular}{lccc|ccc}
\toprule
& \multicolumn{3}{c}{\pepfunc AP $\uparrow$} & \multicolumn{3}{c}{\pepstruct MAE $\downarrow$} \\
\cline{2-7}
& \textbf{w/o VN (only graph topology)} & \textbf{w/ VN + graph topology} & \textbf{Only VN} & \textbf{w/o VN (only graph topology)} & \textbf{w/ VN + graph topology} & \textbf{Only VN} \\ \midrule
GCN & $0.5930 \pm 0.0023$ & $0.6623 \pm 0.0038$ & $0.6488 \pm 0.0056$ & $0.3496 \pm 0.0013$ & $0.2488 \pm 0.0021$ & $0.2511 \pm 0.0025$ \\
GINE & $0.5498 \pm 0.0079$ & $0.6346 \pm 0.0071$ & $0.6022 \pm 0.0072$ & $0.3547 \pm 0.0045$ & $0.2584 \pm 0.0011$ & $0.2608 \pm 0.0021$ \\
GatedGCN & $0.5864 \pm 0.0077$ & $0.6635 \pm 0.0024$ & $0.6493 \pm 0.0044$ & $0.3420 \pm 0.0013$ & $0.2523 \pm 0.0016$ & $0.2684 \pm 0.0039$ \\
GatedGCN+RWSE & $0.6069 \pm 0.0035$ & $0.6685 \pm 0.0062$ & $0.6432 \pm 0.0072$ & $0.3357 \pm 0.0006$ & $0.2529 \pm 0.0009$ & $0.2645 \pm 0.0023$ \\
\bottomrule
\end{tabular}
}
\end{table}

\subsection{Additional Experiments}
We tested MPNN + VN on \pascal datasets and also observe improvement, shown in \Cref{tab:pascal}, although the improvement is not as large as that of \pepfunc and \pepstruct datasets. The best MPNN + VN model is GatedGCN + LapPE where the performance gap to the best GT model is rather small.  

\begin{table}[hbtp]
    \caption{Baseline experiments for \pascal and \coco with \rbgraph graph on SLIC compactness 30 for the node classification task. The performance metric is macro F1 on the respective splits (Higher is better). All experiments are run 4 times with 4 different seeds. 
    The MP-GNN models are 8 layers deep, while the transformer-based models have 4 layers in order to maintain comparable hidden representation size at the fixed parameter budget of 500k. \textbf{Bold}: Best score.
    }
    \label{tab:pascal}
    \begin{adjustwidth}{-2.5 cm}{-2.5 cm}\centering
    \scalebox{1}{
    \setlength\tabcolsep{4pt} 
    \begin{tabular}{l c c c } 
    \toprule
    \multirow{2}{*}{\textbf{Model}} & \multirow{2}{*}{\hspace*{-2em}\textbf{\# Params}} & \multicolumn{2}{c}{\pascal} \\ \cmidrule(lr){3-4} 
                                    &                                                   & \textbf{Before VN + Test F1}                        & \textbf{After VN + Test F1 $\uparrow$} \\ 
    \midrule 
    GCN                             & 496k                                              & 0.1268$\pm$0.0060                                   & 0.1901$\pm$0.0040                 \\ 
    GINE                            & 505k                                              & 0.1265$\pm$0.0076                                   & 0.1198$\pm$0.0073                 \\ 
    GatedGCN                        & 502k                                              & 0.2873$\pm$0.0219                                   & 0.2874$\pm$0.0178                 \\ 
    GatedGCN+LapPE                  & 502k                                              & 0.2860$\pm$0.0085                                   & \first{0.3103$\pm$0.0068}                 \\ \cmidrule(l){1-4}
    Transformer+LapPE               & 501k                                              & 0.2694$\pm$0.0098                                   & -                 \\ 
    SAN+LapPE                       & 531k                                              & \first{0.3230$\pm$0.0039}                           & -              \\ 
    SAN+RWSE                        & 468k                                              & \second{0.3216$\pm$0.0027}                          & -             \\ 
    \bottomrule
    \end{tabular}
    }
    \end{adjustwidth}
\end{table}


\subsection{Predicting Sea Surface Temperature} \label{appendix:climate}
\label{appendix:climate}
In this experiment, we consider a specific physical modeling problem: forecasting sea surface temperature (SST), that is the water temperature close to the ocean's surface. SST is an essential climate indicator and plays a significant role in analyzing and monitoring the dynamics of weather, climate, and other biological systems for several applications in environmental protection, agriculture, and industry. We use the NOAA/NESDIS/NCEI Daily Optimum Interpolation Sea Surface Temperature (DOISST) version 2.1 proposed by \cite{ImprovementsoftheDailyOptimumInterpolationSeaSurfaceTemperatureDOISSTVersion21} as an improvement upon version 2.0 from \cite{Reynolds:2007}.

We consider the daily SST data of the Pacific Ocean from 1982 to 2021, in the region of longitudes from $180.125^{\circ}\text{E}$ to $269.875^{\circ}\text{E}$ and latitudes from $-14.875^{\circ}\text{N}$ 
to $14.875^{\circ}\text{N}$. We reduce the resolution of the original data from $0.25^{\circ}$-degree 
to $0.5^{\circ}$-degree. 
Following the procedure from \cite{de2018deep}, \cite{deBezenac2019} and \cite{wang2022metalearning}, we divide the region into 11 square batches of equal size (see Table \ref{tab:Pacific-regions}), each contains exactly 30 longitudes and 30 latitudes that can be represented as a grid graph of 900 nodes in which we connect each node to its nearest 8 neighbors. We take time series from 1982 to 2018 as our training set, data in 2019 as our validation set, and data from 2020 to 2021 as our testing set. 
In our experiments, we set the history window $w_h$ as 6 weeks (i.e. 42 days) and the prediction window $w_p$ as 4 weeks (i.e. 28 days), 2 weeks (i.e. 14 days) or 1 week (i.e. 7 days). For each example, each node of the graph is associated with an input time series capturing the temperatures at the corresponding (longitude, latitude) for the last $w_h$ days, and the task is to predict the output time series of temperatures for the next $w_p$ days.

We represent each time series as a long vector and the learning task is fundamentally a node-level regression task. We make sure that there is no overlapping among training, validation and testing sets (e.g., the output of a training example will \textit{not} appear in any input of another validation example). The number of training, validation, and testing examples are roughly 150K, 3K and 7K, respectively for each setting (see Table \ref{tab:noexamples}). We compare our MPNN + VN model with:
\begin{itemize}
\item Multilayer Perceptron (MLP) which treats both the input and output as long vectors and has 512 hidden neurons.
\item TF-Net \cite{Rui2020} with the setting as in the original paper.
\item Linear Transformer \cite{katharopoulos-et-al-2020} \cite{wang2020linformer}\footnote{The Linear Transformer implementation is publicly available at \url{https://github.com/lucidrains/linear-attention-transformer}} with Laplacian positional encoding (LapPE). We compute the first 16 eigenvectors as positions for LapPE.
\end{itemize}
Both MPNN and MPNN + VN have 3 layers of message passing with 256 hidden dimensions. We apply an MLP with one hidden layer of 512 neurons on top of the network to make the final prediction.

We train all our models with 100 epochs with batch size 20, initial learning rate $10^{-3}$, and Adam optimizer \cite{Adam2014}.

\begin{table}
\caption{\label{tab:noexamples} Number of training, validation and testing examples for each setting in the task of SST prediction.}
\begin{center}
\begin{tabular}{ccccc}
\toprule
\textbf{History window} & \textbf{Prediction window} & \textbf{Train size} & \textbf{Validation size} & \textbf{Test size} \\
\midrule
\multirow{2}{*}{6 weeks} & 4 weeks & $147,884$ & $3,245$ & $7,271$ \\
& 2 weeks & $148,038$ & $3,399$ & $7,425$ \\
& 1 week & $148,115$ & $3,476$ & $7,502$ \\
\bottomrule
\end{tabular}
\end{center}
\end{table}

\begin{table}
\caption{\label{tab:Pacific-regions} These are 11 regions of the Pacific in our experiment.}
\begin{center}
\begin{tabular}{ccc}
\toprule
\textbf{Index} & \textbf{Longitudes} & \textbf{Latitues} \\
\midrule
1 & [180.125$^{\circ}\text{E}$, 194.875$^{\circ}\text{E}$] & [-14.875$^{\circ}\text{N}$, -0.125$^{\circ}\text{N}$] \\
2 & [195.125$^{\circ}\text{E}$, 209.875$^{\circ}\text{E}$] & [-14.875$^{\circ}\text{N}$, -0.125$^{\circ}\text{N}$] \\
3 & [210.125$^{\circ}\text{E}$, 224.875$^{\circ}\text{E}$] & [-14.875$^{\circ}\text{N}$, -0.125$^{\circ}\text{N}$] \\
4 & [225.125$^{\circ}\text{E}$, 239.875$^{\circ}\text{E}$] & [-14.875$^{\circ}\text{N}$, -0.125$^{\circ}\text{N}$] \\
5 & [240.125$^{\circ}\text{E}$, 254.875$^{\circ}\text{E}$] & [-14.875$^{\circ}\text{N}$, -0.125$^{\circ}\text{N}$] \\
6 & [255.125$^{\circ}\text{E}$, 269.875$^{\circ}\text{E}$] & [-14.875$^{\circ}\text{N}$, -0.125$^{\circ}\text{N}$] \\
7 & [180.125$^{\circ}\text{E}$, 194.875$^{\circ}\text{E}$] & [0.125$^{\circ}\text{N}$, 14.875$^{\circ}\text{N}$] \\
8 & [195.125$^{\circ}\text{E}$, 209.875$^{\circ}\text{E}$] & [0.125$^{\circ}\text{N}$, 14.875$^{\circ}\text{N}$] \\
9 & [210.125$^{\circ}\text{E}$, 224.875$^{\circ}\text{E}$] & [0.125$^{\circ}\text{N}$, 14.875$^{\circ}\text{N}$] \\
10 & [225.125$^{\circ}\text{E}$, 239.875$^{\circ}\text{E}$] & [0.125$^{\circ}\text{N}$, 14.875$^{\circ}\text{N}$] \\
11 & [240.125$^{\circ}\text{E}$, 254.875$^{\circ}\text{E}$] & [0.125$^{\circ}\text{N}$, 14.875$^{\circ}\text{N}$] \\
\bottomrule
\end{tabular}
\end{center}
\end{table}

\subsection{Connection to Over-Smoothing Phenomenon}
\label{subsec:over-smoothing}
Over-smoothing refers to the phenomenon that deep GNN will produce same features at different nodes after too many convolution layers. Here we draw some connection between VN and common ways of reducing over-smoothing. We think that using VN can potentially help alleviate the over-smoothing problem. In particular, we note that the use of VN can simulate some strategies people use in practice to address over-smoothing. We give two examples below. 

Example 1: In \cite{zhao2019pairnorm}, the two-step method (center \& scale) PairNorm is proposed to reduce the over-smoothing issues. In particular, PairNorm consists of 1) Center and 2) Scale

$$\tilde{\mathbf{x}}_i^c =\tilde{\mathbf{x}}_i-\frac{1}{n} \sum_i \tilde{\mathbf{x}}_i$$

$$\dot{\mathbf{x}}_i = s \cdot \frac{\tilde{\mathbf{x}}_i^c}{\sqrt{\frac{1}{n} \sum_i \left||\tilde{\mathbf{x}}_i^c\right||_2^2}}$$

Where $\tilde{\mathbf{x}}$ is the node features after graph convolution and $s$ is a hyperparameter. The main component for implementing PairNorm is to compute the mean and standard deviation of node features. For the mean of node features, this can be exactly computed in VN. For standard deviation, VN can arbitrarily approximate it using the standard universality result of MLP [5]. If we further assume that the standard deviation is lower bounded by a constant, then MPNN + VN can arbitrarily approximate the PairNorm on the compact set. 

Example 2: In \cite{yang2020revisiting} mean subtraction (same as the first step of PairNorm) is also introduced to reduce over-smoothing. As mean subtraction can be trivially implemented in MPNN + VN, arguments in \citep{yang2020revisiting} (with mean subtraction the revised power Iteration in GCN will lead to the Fiedler vector) can be carried over to MPNN + VN setting. 

In summary, introducing VN allows MPNN to implement key components of \cite{yang2020revisiting,zhao2019pairnorm}, we think this is one reason why we observe encouraging empirical performance gain of MPNN + VN.

\end{document}